\newtheorem{theorem}{Theorem}
\newtheorem{corollary}{Corollary}
\newtheorem{proposition}{Proposition}
\begin{document}
%
\title{Connections Between Nuclear Norm and Frobenius Norm Based Representations}
%
%
%
%

\author{Xi Peng,
        Canyi Lu,		
        Zhang Yi,~\IEEEmembership{Fellow~IEEE,}
        Huajin Tang,~\IEEEmembership{Member~IEEE}
\thanks{This work was supported by  A*STAR Industrial Robotics Programme - Distributed Sensing and Perception under SERC grant 1225100002, the National Natural Science Foundation of China under Grant 61432012 and 61673283. Corresponding author: H. Tang.}
\thanks{X. Peng is with Institute for Infocomm Research, A*STAR, Singapore 138632 (E-mail: pangsaai@gmail.com).}
\thanks{C. Lu is with Department of Electrical and Computer Engineering at National University of Singapore, Singapore 119077. (E-mail: canyilu@gmail.com).}
\thanks{Z. Yi and H. Tang are with College of Computer
Science, Sichuan University, Chengdu, China 610065 (E-mail: \{zhangyi,htang\}@scu.edu.cn). 
}
}

%
%

\markboth{}%
{Shell \MakeLowercase{\textit{et al.}}: Bare Demo of IEEEtran.cls for Computer Society Journals}
%



\IEEEtitleabstractindextext{%
\begin{abstract}
A lot of works have shown that frobenius-norm based representation (FNR) is competitive to sparse representation and nuclear-norm based representation (NNR) in numerous tasks such as subspace clustering. Despite the success of FNR in experimental studies, less theoretical analysis is provided to understand its working mechanism. In this paper, we fill this gap by building the theoretical connections between FNR and NNR. More specially, we prove that: 1) when the dictionary can provide enough representative capacity, FNR is exactly NNR even though the data set contains the Gaussian noise, Laplacian noise, or sample-specified corruption; 2) otherwise, FNR and NNR are two solutions on the column space of the dictionary. 
\end{abstract}

\begin{IEEEkeywords}
Equivalence, low rank representation, least square regression, $\ell_2$-minimization, rank-minimization.
\end{IEEEkeywords}}

\maketitle

\IEEEdisplaynontitleabstractindextext

%
\IEEEpeerreviewmaketitle

\section{Introduction}\label{sec1}

\IEEEPARstart{M}{any} problems in machine learning and computer vision begin with the processing of linearly inseparable data. The goal of processing is to distinct linearly inseparable data with linear methods. To achieve this, the inputs are always projected from the original space into another space. This is so-called representation learning and three methods have been  extensively investigated in the community of computer vision, \textit{i.e.}, sparse representation (SR), low rank representation (LRR), and Frobenius-norm based representation (FNR).

During the past decade, sparse representation~\cite{Donoho2003,Wright2010} has been one of the most popular representation learning methods. It linearly reconstructs each sample using a few of basis and has shown the effectiveness in a lot of applications, \textit{e.g.}, image repairing~\cite{Aharon2006}, face recognition~\cite{Wright2009}, online learning control~\cite{Xu2013:TNNLS}, dimension reduction~\cite{Cheng2010}, and subspace clustering~\cite{Elhamifar2013,Peng2015SRSC}. 

 As another popular method, low rank representation~\cite{Liu2013,Favaro2011,Vidal2014,Sprechmann2015,Xiao2015TNN,Xiao2015aTNN} has been proposed for subspace learning and subspace clustering. Different from SR, LRR computes the representation of a data set rather than a data point by solving a nuclear norm minimization problem. Thus, LRR is also known as nuclear norm based representation (NNR). Both LRR and SR benefit from the compressive sensing  theory~\cite{Cai2010} which establishes the equivalence between $\ell_0$- (rank-minimization \textit{w.r.t.} matrix space) and $\ell_1$-norm (nuclear-norm \textit{w.r.t.} matrix space) based optimization problems. More specifically, compressive sensing provides the theoretical foundation to transform the non-convex problem caused by $\ell_0$-norm into a convex problem using $\ell_1$-norm. 
 
Recently, several works have shown that the Frobenius norm based representation (FNR) is competitive to SR and NNR in face recognition~\cite{Naseem2010,Shi2011face,Zhang2011}, subspace learning~\cite{Peng2012,Peng2016:Auto}, feature selection~\cite{Xu2016:TNNLS}, and subspace clustering~\cite{Lu2012,Peng2015robust}. The advantage of FNR is that the objective only involves a strictly convex problem and thus the trap of local minimal is avoided. 

Although more and more experimental evidences have been provided to show the effectiveness of FNR, the success of FNR is counter-intuitive as FNR is generally considered to be inferior to SR and NNR. Furthermore, fewer theoretical studies have been done to explore what makes FNR effective. Motivated by two NNR works~\cite{Favaro2011,Vidal2014}, this paper provides a novel theoretical explanation by bridging FNR and NNR. In other words, we show that under some mild conditions, the convex problem caused by nuclear norm can be converted to a strictly convex problem based on the Frobenius norm. More specifically, we prove that: 1) when the dictionary has enough representative capacity, FNR is equivalent to the NNR~\cite{Favaro2011,Vidal2014} even though the data set contains the Gaussian noise, Laplacian noise, or sample-specified corruption; 2) when the dictionary has limited representative capacity, FNR and NNR are two solutions of the column space spanned by inputs. Our theoretical results unify FNR and NNR into a framework, \textit{i.e.}, FNR and NNR are in the form of $\mathbf{V}\mathcal{P}(\mathbf{\Sigma})\mathbf{V}^{T}$, where $\mathbf{U}\mathbf{\Sigma}\mathbf{V}^{T}$ is the singular value decomposition (SVD) of a given data matrix and $\mathcal{P}(\cdot)$ denotes the shrinkage-thresholding operator. The difference between FNR and NNR lies in the different choices of the shrinkage-thresholding operator. To the best of our knowledge, this is one of the first several works to establish the connections between FNR and NNR.

\section{Background}
\label{sec2}

For a given data set $\mathbf{X}\in \mathds{R}^{m\times n^{\prime}}$ (each column denotes a data point), it can be decomposed as the linear combination of $\mathbf{D}\in \mathds{R}^{m\times n}$ by
\begin{equation}
\label{eq2.1}
\mathop{\min}_{\mathbf{C}} f(\mathbf{C})\hspace{3mm} \mathrm{s.t.} \hspace{1mm} \mathbf{X}=\mathbf{DC},
\end{equation}
where $f(\mathbf{C})$ denotes the constraint enforced over the representation $\mathbf{C}\in \mathbf{R}^{n\times n^{\prime}}$. The main difference among most existing works is their objective functions, basically, the choice of $f(\mathbf{C})$. Different assumptions motivate different $f(\cdot)$ and this work focuses on the discussion of two popular objective functions, \textit{i.e.}, nuclear-norm and Frobenius-norm. 

By assuming $\mathbf{C}$ is low rank and the input contains noise, Liu \textit{et\ al.}~\cite{Liu2013} propose solving the following nuclear norm based minimization problem:
\begin{equation}
\label{eq2.4}
\mathop{\min}_{\mathbf{C},\mathbf{E}}\hspace{1mm}\underbrace{\|\mathbf{C}\|_{\ast}+\lambda\|\mathbf{E}\|_{p}}_{f(\mathbf{C})}
\hspace{3mm} \mathrm{s.t.} \hspace{1mm} \underbrace{\mathbf{D}=\mathbf{DC}+\mathbf{E}}_{\mathrm{Noisy\ Case}},
\end{equation}
where $\|\mathbf{C}\|_{\ast}=\sum\sigma_{i}(\mathbf{C})$, $\sigma_i(\mathbf{C})$ is the $i$th singular value of $\mathbf{C}$, and $\|\cdot\|_{p}$ could be chosen as $\ell_{2,1}$-, $\ell_{1}$-, or Frobenius-norm. $\ell_{2,1}$-norm is usually adopted to depict the sample-specific corruptions such as outliers, $\ell_1$-norm is used to characterize the Laplacian noise, and Frobenius norm is used to describe the Gaussian noise. 

Although Eq.(\ref{eq2.4}) can be easily solved by the Augmented Lagrangian method (ALM)~\cite{Boyd2011}, its computational complexity is still very high. Recently, Favaro and Vidal~\cite{Favaro2011,Vidal2014} proposed a new formulation of LRR which can be calculated very fast. The proposed objective function is as follows:
\begin{equation}
\label{eq2.5}
\mathop{\min}_{\mathbf{C},\mathbf{D}_{0}}\|\mathbf{C}\|_{\ast}+\lambda\|\mathbf{D}-\mathbf{D}_{0}\|_{F}
\hspace{3mm} \mathrm{s.t.} \hspace{1mm} \mathbf{D}=\mathbf{D}_{0}\mathbf{C}+\mathbf{E},
\end{equation}
where $\mathbf{D}_{0}$ denotes the clean dictionary and $\|\cdot\|_{F}$ denotes the Frobenius-norm of a given data matrix. Different from Eq.(\ref{eq2.4}), Eq.(\ref{eq2.5}) calculates the low rank representation using a clean dictionary $\mathbf{D}_{0}$ instead of the original data $\mathbf{D}$. Moreover,  Eq.(\ref{eq2.5}) has a closed-form solution. In this paper, we mainly investigate this formulation of NNR.

Another popular representation is based on $\ell_2$-norm or its induced matrix norm (\textit{i.e.}, the Frobenius norm). The basic formulation of FNR is as follows:
\begin{equation}
\label{eq2.6}
\min \|\mathbf{C}\|_{F}\hspace{3mm} \mathrm{s.t.} \hspace{1mm} \mathbf{X}=\mathbf{DC}.
\end{equation}

In our previous work~\cite{Zhang2014}, we have shown that the optimal solution to Eq.(\ref{eq2.6}) is also the lowest rank solution, \textit{i.e.}, 

\begin{theorem}[\cite{Zhang2014}]
\label{thm2}
Assume $\mathbf{D}\ne \mathbf{0}$ and $\mathbf{X}=\mathbf{DC}$ has feasible solution(s), \textit{i.e.}, $\mathbf{X}\in span(\mathbf{D})$. Then
\begin{equation}
	\label{lem2:eq1}
	\mathbf{C}^{\ast}=\mathbf{D}^{\dag}\mathbf{X}
\end{equation}
is the unique minimizer to Eq.(\ref{eq2.6}), where $\mathbf{D}^{\dag}$ is the pseudo-inverse of $\mathbf{D}$.
\end{theorem}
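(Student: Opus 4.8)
The statement to prove is that $\mathbf{C}^{\ast}=\mathbf{D}^{\dag}\mathbf{X}$ is the unique minimizer of the Frobenius-norm program Eq.(\ref{eq2.6}); the discussion preceding the theorem additionally asserts that this very matrix is the lowest-rank feasible solution, so I would prove both facets. The plan is to (i) verify feasibility, (ii) decompose an arbitrary feasible point into a component in the row space of $\mathbf{D}$ and a component in $\mathrm{null}(\mathbf{D})$ and exploit their orthogonality to pin down the unique Frobenius minimizer, and (iii) sandwich $\mathrm{rank}(\mathbf{C}^{\ast})$ between $\mathrm{rank}(\mathbf{X})$ and itself to conclude rank minimality, thereby linking Eq.(\ref{eq2.6}) to the rank-minimization picture.

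First I would note that feasibility is immediate from the hypothesis $\mathbf{X}\in\mathrm{span}(\mathbf{D})$: since $\mathbf{D}\mathbf{D}^{\dag}$ is the orthogonal projector onto $\mathrm{range}(\mathbf{D})$ and $\mathbf{X}$ already lies there, $\mathbf{D}\mathbf{C}^{\ast}=\mathbf{D}\mathbf{D}^{\dag}\mathbf{X}=\mathbf{X}$. For uniqueness I would write every feasible $\mathbf{C}$ as $\mathbf{C}=\mathbf{C}^{\ast}+\mathbf{Z}$ with $\mathbf{D}\mathbf{Z}=\mathbf{0}$, so each column of $\mathbf{Z}$ lies in $\mathrm{null}(\mathbf{D})$. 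The crucial structural fact is that $\mathbf{C}^{\ast}$ is fixed by the projector $\mathbf{D}^{\dag}\mathbf{D}$ onto the row space of $\mathbf{D}$, because $\mathbf{D}^{\dag}\mathbf{D}\mathbf{C}^{\ast}=\mathbf{D}^{\dag}\mathbf{D}\mathbf{D}^{\dag}\mathbf{X}=\mathbf{D}^{\dag}\mathbf{X}=\mathbf{C}^{\ast}$. Transposing and using the self-adjointness $(\mathbf{D}^{\dag}\mathbf{D})^{T}=\mathbf{D}^{\dag}\mathbf{D}$ gives $(\mathbf{C}^{\ast})^{T}=(\mathbf{C}^{\ast})^{T}\mathbf{D}^{\dag}\mathbf{D}$, so the Frobenius inner product $\langle\mathbf{C}^{\ast},\mathbf{Z}\rangle_{F}=\mathrm{tr}\big((\mathbf{C}^{\ast})^{T}\mathbf{D}^{\dag}\mathbf{D}\mathbf{Z}\big)=\mathrm{tr}\big((\mathbf{C}^{\ast})^{T}\mathbf{D}^{\dag}(\mathbf{D}\mathbf{Z})\big)=0$. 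The Pythagorean identity then yields $\|\mathbf{C}\|_{F}^{2}=\|\mathbf{C}^{\ast}\|_{F}^{2}+\|\mathbf{Z}\|_{F}^{2}$, which strictly exceeds $\|\mathbf{C}^{\ast}\|_{F}^{2}$ unless $\mathbf{Z}=\mathbf{0}$, establishing that $\mathbf{C}^{\ast}$ is the unique solution of Eq.(\ref{eq2.6}).

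To reconcile this with the lowest-rank assertion framing the theorem, I would bound the rank from both sides. For any feasible $\mathbf{C}$, the rank inequality for products gives $\mathrm{rank}(\mathbf{X})=\mathrm{rank}(\mathbf{D}\mathbf{C})\le\mathrm{rank}(\mathbf{C})$, so no solution can drop below $\mathrm{rank}(\mathbf{X})$; meanwhile $\mathrm{rank}(\mathbf{C}^{\ast})=\mathrm{rank}(\mathbf{D}^{\dag}\mathbf{X})\le\mathrm{rank}(\mathbf{X})$, while feasibility $\mathbf{D}\mathbf{C}^{\ast}=\mathbf{X}$ forces $\mathrm{rank}(\mathbf{C}^{\ast})\ge\mathrm{rank}(\mathbf{X})$. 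Hence $\mathrm{rank}(\mathbf{C}^{\ast})=\mathrm{rank}(\mathbf{X})$ is minimal, so the Frobenius minimizer is simultaneously a lowest-rank representation. I expect the main obstacle to be not any single calculation but the careful deployment of the four Moore-Penrose identities needed to certify that the row-space and null-space components are genuinely orthogonal in the Frobenius inner product; once that orthogonality is secured, both the strict-convexity uniqueness and the rank minimality follow cleanly.
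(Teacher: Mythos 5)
Your proof is correct, and every step checks out: feasibility via $\mathbf{D}\mathbf{D}^{\dag}\mathbf{X}=\mathbf{X}$, the decomposition $\mathbf{C}=\mathbf{C}^{\ast}+\mathbf{Z}$ with $\mathbf{D}\mathbf{Z}=\mathbf{0}$, the orthogonality $\langle\mathbf{C}^{\ast},\mathbf{Z}\rangle_{F}=0$ obtained from $\mathbf{D}^{\dag}\mathbf{D}\mathbf{D}^{\dag}=\mathbf{D}^{\dag}$ and the symmetry of $\mathbf{D}^{\dag}\mathbf{D}$, and the resulting Pythagorean identity giving strict inequality unless $\mathbf{Z}=\mathbf{0}$. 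One thing to note for context: this paper does not prove Theorem~\ref{thm2} at all --- it imports the statement from \cite{Zhang2014} --- so there is no in-paper proof to compare against; your argument is the canonical minimum-norm-solution proof that such a reference would contain. Two small observations. First, your closing worry is overstated: you only need two Moore--Penrose facts (the identity $\mathbf{D}^{\dag}\mathbf{D}\mathbf{D}^{\dag}=\mathbf{D}^{\dag}$ and the self-adjointness of $\mathbf{D}^{\dag}\mathbf{D}$), not all four defining identities, and you have already deployed exactly these two. Second, your rank addendum is a genuine bonus beyond the stated theorem: the sandwich $\mathrm{rank}(\mathbf{X})\le\mathrm{rank}(\mathbf{C})$ for every feasible $\mathbf{C}$ together with $\mathrm{rank}(\mathbf{C}^{\ast})=\mathrm{rank}(\mathbf{D}^{\dag}\mathbf{X})=\mathrm{rank}(\mathbf{X})$ substantiates the paper's surrounding claim (echoed in Theorem~\ref{thm1} and the link to NNR) that the Frobenius minimizer is simultaneously a lowest-rank feasible solution, which the paper asserts but leaves to the citations.
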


Considering nuclear norm based minimization problem, Liu \textit{et\ al.}~\cite{Liu2013} have shown that 
\begin{theorem}[\cite{Liu2013}]
\label{thm1}
Assume $\mathbf{D}\ne \mathbf{0}$ and $\mathbf{X}=\mathbf{DC}$ has feasible solution(s), \textit{i.e.}, $\mathbf{X}\in span(\mathbf{D})$. Then
\begin{equation}
	\label{lem1:eq1}
	\mathbf{C}^{\ast}=\mathbf{D}^{\dag}\mathbf{X}
\end{equation}
is the unique minimizer to
\begin{equation}
	\min \|\mathbf{C}\|_{\ast}\hspace{3mm} \mathrm{s.t.}\hspace{1mm}\mathbf{X}=\mathbf{D}\mathbf{C},
\end{equation}
where $\mathbf{D}^{\dag}$ is the pseudo-inverse of $\mathbf{D}$.
\end{theorem}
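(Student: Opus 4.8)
The plan is to mirror the structure used for the Frobenius case in Theorem~\ref{thm2} and reduce everything to the compact singular value decomposition of the dictionary. Writing $\mathbf{D}=\mathbf{U}\mathbf{\Sigma}\mathbf{V}^{T}$ with $\mathbf{U}\in\mathds{R}^{m\times r}$, $\mathbf{V}\in\mathds{R}^{n\times r}$ and $\mathbf{\Sigma}\in\mathds{R}^{r\times r}$ invertible ($r=\mathrm{rank}(\mathbf{D})$), the pseudo-inverse is $\mathbf{D}^{\dagger}=\mathbf{V}\mathbf{\Sigma}^{-1}\mathbf{U}^{T}$. First I would verify feasibility of the candidate: since $\mathbf{X}\in span(\mathbf{D})$ is equivalent to $\mathbf{U}\mathbf{U}^{T}\mathbf{X}=\mathbf{X}$, a direct computation gives $\mathbf{D}\mathbf{C}^{\ast}=\mathbf{U}\mathbf{\Sigma}\mathbf{V}^{T}\mathbf{V}\mathbf{\Sigma}^{-1}\mathbf{U}^{T}\mathbf{X}=\mathbf{U}\mathbf{U}^{T}\mathbf{X}=\mathbf{X}$, so $\mathbf{C}^{\ast}=\mathbf{D}^{\dagger}\mathbf{X}$ lies in the feasible set.

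Next I would write an arbitrary feasible point as $\mathbf{C}=\mathbf{C}^{\ast}+\mathbf{H}$, where the constraint $\mathbf{X}=\mathbf{D}\mathbf{C}=\mathbf{D}\mathbf{C}^{\ast}+\mathbf{D}\mathbf{H}$ forces $\mathbf{D}\mathbf{H}=\mathbf{0}$. Because $\mathbf{U}\mathbf{\Sigma}$ has full column rank (this is where $\mathbf{D}\ne\mathbf{0}$, hence invertibility of $\mathbf{\Sigma}$, is genuinely used), $\mathbf{U}\mathbf{\Sigma}\mathbf{V}^{T}\mathbf{H}=\mathbf{0}$ yields $\mathbf{V}^{T}\mathbf{H}=\mathbf{0}$. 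Since $\mathbf{C}^{\ast}=\mathbf{V}(\mathbf{\Sigma}^{-1}\mathbf{U}^{T}\mathbf{X})$ has all of its columns in $span(\mathbf{V})$, this immediately gives the orthogonality relation $(\mathbf{C}^{\ast})^{T}\mathbf{H}=\mathbf{X}^{T}\mathbf{U}\mathbf{\Sigma}^{-1}(\mathbf{V}^{T}\mathbf{H})=\mathbf{0}$; that is, $\mathbf{C}^{\ast}$ and $\mathbf{H}$ have orthogonal column spaces. The whole problem thus reduces to a single clean inequality about the nuclear norm of a sum of matrices with orthogonal columns.

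The core step I would isolate as a lemma: if $\mathbf{A}^{T}\mathbf{B}=\mathbf{0}$, then $\|\mathbf{A}+\mathbf{B}\|_{\ast}\ge\|\mathbf{A}\|_{\ast}$, with equality iff $\mathbf{B}=\mathbf{0}$. The proof I have in mind passes to Gram matrices, which is exactly the route that only needs column-space orthogonality: here $(\mathbf{A}+\mathbf{B})^{T}(\mathbf{A}+\mathbf{B})=\mathbf{A}^{T}\mathbf{A}+\mathbf{B}^{T}\mathbf{B}\succeq\mathbf{A}^{T}\mathbf{A}$, so by Weyl's monotonicity theorem $\lambda_{i}(\mathbf{A}^{T}\mathbf{A}+\mathbf{B}^{T}\mathbf{B})\ge\lambda_{i}(\mathbf{A}^{T}\mathbf{A})$ for every $i$, and taking square roots gives $\sigma_{i}(\mathbf{A}+\mathbf{B})\ge\sigma_{i}(\mathbf{A})$ termwise; summing proves the inequality. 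Applying this with $\mathbf{A}=\mathbf{C}^{\ast}$ and $\mathbf{B}=\mathbf{H}$ shows $\mathbf{C}^{\ast}$ is a minimizer. (A subgradient argument using $\mathbf{U}_{\mathbf{C}^{\ast}}\mathbf{V}_{\mathbf{C}^{\ast}}^{T}$ also delivers optimality, but to squeeze out \emph{strict} inequality it wants orthogonal row spaces, which we do not have, so I would avoid it.)

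The step I expect to be the main obstacle is uniqueness, i.e. converting the equality case into $\mathbf{H}=\mathbf{0}$. Here I would use that equality forces $\sigma_{i}(\mathbf{A}+\mathbf{B})=\sigma_{i}(\mathbf{A})$ for every $i$, since each summand in $\sum_{i}(\sigma_{i}(\mathbf{A}+\mathbf{B})-\sigma_{i}(\mathbf{A}))$ is nonnegative and the total is zero; consequently $\mathrm{tr}(\mathbf{B}^{T}\mathbf{B})=\mathrm{tr}(\mathbf{A}^{T}\mathbf{A}+\mathbf{B}^{T}\mathbf{B})-\mathrm{tr}(\mathbf{A}^{T}\mathbf{A})=\sum_{i}\lambda_{i}(\mathbf{A}^{T}\mathbf{A}+\mathbf{B}^{T}\mathbf{B})-\sum_{i}\lambda_{i}(\mathbf{A}^{T}\mathbf{A})=0$, so $\|\mathbf{B}\|_{F}=0$ and $\mathbf{B}=\mathbf{0}$. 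The subtle point to get right is that the two Gram matrices must be compared over their full common spectrum (padding with zero eigenvalues) so that no singular value is silently dropped when passing between the eigenvalue and singular-value counts; everything else is bookkeeping. It is worth remarking that the minimizer coincides with that of Theorem~\ref{thm2}, which is precisely the FNR/NNR bridge the paper is after.
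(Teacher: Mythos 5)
Your proof is correct; I checked the feasibility computation, the reduction of the constraint to $\mathbf{V}^{T}\mathbf{H}=\mathbf{0}$, the column-orthogonality relation $(\mathbf{C}^{\ast})^{T}\mathbf{H}=\mathbf{0}$, the Gram-matrix/Weyl lemma (including the zero-padding of the spectra you flag), and the trace argument settling the equality case; every step holds, and the decomposition $\mathbf{C}=\mathbf{C}^{\ast}+\mathbf{H}$ does deliver uniqueness. Note, however, that the paper itself contains no proof of Theorem~\ref{thm1}: it is imported verbatim from Liu \emph{et al.}~\cite{Liu2013}, so there is no internal argument to compare against. Relative to the original proof in~\cite{Liu2013}, your decomposition is essentially the same one (the constraint pins down $\mathbf{V}^{T}\mathbf{C}=\mathbf{\Sigma}^{-1}\mathbf{U}^{T}\mathbf{X}$ for every feasible $\mathbf{C}$, so feasible points differ from $\mathbf{C}^{\ast}=\mathbf{V}\mathbf{V}^{T}\mathbf{C}$ only by a component annihilated by $\mathbf{V}^{T}$), but the key inequality is reached differently: Liu \emph{et al.} invoke a block-matrix nuclear-norm lemma (the nuclear norm of a partitioned matrix dominates that of a diagonal block, with equality iff the remaining blocks vanish) together with unitary invariance, whereas you derive $\|\mathbf{A}+\mathbf{B}\|_{\ast}\ge\|\mathbf{A}\|_{\ast}$ for $\mathbf{A}^{T}\mathbf{B}=\mathbf{0}$ from Weyl monotonicity of the eigenvalues of $\mathbf{A}^{T}\mathbf{A}+\mathbf{B}^{T}\mathbf{B}$. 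Your route buys two things: it needs only one-sided (column-space) orthogonality, which is exactly what the problem supplies, and the termwise singular-value inequality converts the equality case into the trace identity $\mathrm{tr}(\mathbf{B}^{T}\mathbf{B})=0$ in one stroke, so optimality and uniqueness come out of the same computation; you are also right that the subgradient route would stall on uniqueness without two-sided orthogonality, so setting it aside was the correct call.
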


Theorems~\ref{thm2} and~\ref{thm1} actually imply the equivalence between NNR and FNR when the dictionary can exactly reconstruct inputs and the data set  is immune from corruptions. In this paper, we will further investigate the connections between NNR and FNR by considering more complex situations, \textit{e.g.}, the data set is corrupted by Gaussian noise. 


\begin{table*}[t]
\caption{Connections between nuclear norm ($\|\mathbf{C}\|_{\ast}\triangleq \sum_{i}\sigma_{i}(\mathbf{C})$) and Frobenius norm ($\|\mathbf{C}\|_{F}^{2}\triangleq \sum_{i}\sigma_{i}^{2}(\mathbf{C})$) based representation in the case of \textit{the noise-free} and \textit{the Gaussian noise} situations, where $\sigma_{i}(\mathbf{C})$ denotes the $i$th singular value of $\mathbf{C}$. $\mathbf{D}=\mathbf{U}\mathbf{\Sigma}\mathbf{V}^{T}$ is the full SVD of the dictionary $\mathbf{D}$ and $\mathbf{\Sigma}=diag(\sigma_{1}, \sigma_{2}, \cdots)$.}
\label{tab1}
\begin{center}
\begin{footnotesize}
\begin{tabular}{cccc}
\toprule
Objective Function & $\mathbf{C}^{\ast}$ & $\mathcal{P}_{k}(\sigma_{i})$ or $\mathcal{P}_{\gamma}(\sigma_{i})$ & $k$ or $\omega_{i}$\\
\midrule
$\min \|\mathbf{C}\|_{F}\hspace{3mm}\mathrm{s.t.}\hspace{1mm}\mathbf{X}=\mathbf{D}\mathbf{C}$ & $\mathbf{D}^{\dag}\mathbf{X}$ & Nil
   & Nil\\
$\min \|\mathbf{C}\|_{\ast}\hspace{3mm}\mathrm{s.t.}\hspace{1mm}\mathbf{X}=\mathbf{D}\mathbf{C}$ & $\mathbf{D}^{\dag}\mathbf{X}$ & Nil
   & Nil\\
$\min \|\mathbf{C}\|_{F}\hspace{3mm}\mathrm{s.t.}\hspace{1mm}\mathbf{D}=\mathbf{D}\mathbf{C}$ & $\mathbf{V}\mathcal{P}_{k}(\mathbf{\Sigma})\mathbf{V}^{T}$ & $
  \left\{
   \begin{aligned}
   1 &\hspace{0.3cm} i \leq k  \\
   0 &\hspace{0.3cm} i > k  \\
   \end{aligned}
   \right.$
   & $k=rank(\mathbf{D})$\\
$\min \|\mathbf{C}\|_{\ast}\hspace{3mm}\mathrm{s.t.}\hspace{1mm}\mathbf{D}=\mathbf{D}\mathbf{C}$ & $\mathbf{V}\mathcal{P}_{k}(\mathbf{\Sigma})\mathbf{V}^{T}$ & $
  \left\{
   \begin{aligned}
   1 &\hspace{0.3cm} i \leq k  \\
   0 &\hspace{0.3cm} i > k  \\
   \end{aligned}
   \right.$
   & $k=rank(\mathbf{D})$\\
   $\min \frac{1}{2}\|\mathbf{C}\|_{F}^{2}+\frac{\lambda}{2}\|\mathbf{D}-\mathbf{D}_{0}\|_{F}^{2}
\hspace{3mm} \mathrm{s.t.} \hspace{1mm} \mathbf{D}_{0}=\mathbf{D}_{0}\mathbf{C}$ & $\mathbf{V}\mathcal{P}_{k}(\mathbf{\Sigma})\mathbf{V}^{T}$ & $  \left\{
   \begin{aligned}
   1 &\hspace{0.3cm} i \leq k  \\
   0 &\hspace{0.3cm} i > k  \\
   \end{aligned}
   \right.$
   & $\left\{
   \begin{aligned}
   	k&=\mathrm{argmin}_{r} r+\lambda\sum_{i>r}\sigma_{i}^{2}\\
   	r &= rank(\mathbf{D}_{0})\\
   \end{aligned}\right.$\\
   $\min \|\mathbf{C}\|_{\ast}+\frac{\lambda}{2}\|\mathbf{D}-\mathbf{D}_{0}\|_{F}^{2}
\hspace{3mm} \mathrm{s.t.} \hspace{1mm} \mathbf{D}_{0}=\mathbf{D}_{0}\mathbf{C}$ & $\mathbf{V}\mathcal{P}_{k}(\mathbf{\Sigma})\mathbf{V}^{T}$ & $  \left\{
   \begin{aligned}
   1 &\hspace{0.3cm} i \leq k  \\
   0 &\hspace{0.3cm} i > k  \\
   \end{aligned}
   \right.$
   & $\left\{
   \begin{aligned}
   	k&=\mathrm{argmin}_{r} r+\frac{\lambda}{2}\sum_{i>r}\sigma_{i}^{2}\\
   	r &= rank(\mathbf{D}_{0})\\
   \end{aligned}\right.$\\
$\min \frac{1}{2}\|\mathbf{C}\|_{F}^{2}+\frac{\gamma}{2}\|\mathbf{D}-\mathbf{D}\mathbf{C}\|_{F}^{2}$ & $\mathbf{V}\mathcal{P}_{\gamma}(\mathbf{\Sigma})\mathbf{V}^{T}$ & $
   \frac{\gamma\sigma_{i}^{2} }{1+\gamma \sigma_{i}^{2}} $ & Nil\\
   $\min \|\mathbf{C}\|_{\ast}+\frac{\gamma}{2}\|\mathbf{D}-\mathbf{D}\mathbf{C}\|_{F}^{2}$ & $\mathbf{V}\mathcal{P}_{\gamma}(\mathbf{\Sigma})\mathbf{V}^{T}$ & $
  \left\{
   \begin{aligned}
   1-\frac{1}{\gamma\sigma_{i}^{2}} &\hspace{0.3cm} \sigma_{i} > 1/ \sqrt{\gamma}  \\
   0                                                   &\hspace{0.3cm} \sigma_{i} \leq 1/ \sqrt{\gamma} \\
   \end{aligned}
   \right.
$ & Nil\\
$ \min \|\mathbf{C}\|_{F}+\frac{\lambda}{2}\|\mathbf{D}-\mathbf{D}_{0}\|_{F}^{2}+\frac{\gamma}{2}\|\mathbf{D}_{0}-\mathbf{D}_{0}\mathbf{C}\|_{F}^{2}$ & $\mathbf{V}\mathcal{P}_{\gamma}(\mathbf{\Sigma})\mathbf{V}^{T}$ & $
   \frac{\gamma\sigma_{i}^{2} }{1+\gamma \sigma_{i}^{2}} $ & $		\sigma_{i}=\omega_{i}+\frac{\gamma \omega_{i}}{\lambda(1+\gamma\omega_{i}^{2} )^{2}}$\\
$ \min \|\mathbf{C}\|_{\ast}+\frac{\lambda}{2}\|\mathbf{D}-\mathbf{D}_{0}\|_{F}^{2}+\frac{\gamma}{2}\|\mathbf{D}_{0}-\mathbf{D}_{0}\mathbf{C}\|_{F}^{2}$ & $\mathbf{V}\mathcal{P}_{\gamma}(\mathbf{\Sigma})\mathbf{V}^{T}$ & $
  \left\{
   \begin{aligned}
   1-\frac{1}{\gamma\omega_{i}^{2}} &\hspace{0.3cm} \omega_{i} > 1/ \sqrt{\gamma}  \\
   0                                                   &\hspace{0.3cm} \omega_{i} \leq 1/ \sqrt{\gamma} \\
   \end{aligned}
   \right.
$ &
$
  \sigma_{i}=\left\{
   \begin{aligned}
   \omega_{i}+\frac{1}{\lambda\gamma}\omega_{i}^{-3} &\hspace{0.3cm} \omega_{i} > 1/ \sqrt{\gamma}  \\
   \omega_{i}+\frac{\gamma}{\lambda}\omega_{i}                                                  &\hspace{0.3cm} \omega_{i} \leq 1/ \sqrt{\gamma} \\
   \end{aligned}
   \right.$ \\
\bottomrule
\end{tabular}
\end{footnotesize}
\end{center}
\end{table*}

\begin{table*}[t]
\caption{Connections between NNR and FNR in the case of \textit{the Laplacian noise} and \textit{the sample-specified corruption}. $\mathbf{U}\mathbf{\Sigma}\mathbf{V}^{T}=\mathbf{D}-\mathbf{E}_{t}+\alpha^{-1}_{t}\mathbf{Y}_{t}$ is the full SVD of $\mathbf{D}-\mathbf{E}_{t}+\alpha^{-1}_{t}\mathbf{Y}_{t}$, $\mathbf{\Sigma}=diag(\sigma_{1}, \sigma_{2}, \cdots)$, $\mathbf{E}_{t}$ is calculated using the augmented Lagrange multiplier method, and $\alpha_{t}$ and $\mathbf{Y}$ are ALM parameters. Note that, the Laplacian noise and the sample-specified corruption will lead to different $\mathbf{E}_{t}$.}
\label{tab2}
\begin{center}
\begin{footnotesize}
\begin{tabular}{cccc}
\toprule
Objective Function & $\mathbf{C}^{\ast}$ & $\mathcal{P}_{k}(\sigma_{i})$ or $\mathcal{P}_{\gamma}(\sigma_{i})$ & $k$ or $\omega_{i}$\\
\midrule
$\min \frac{1}{2}\|\mathbf{C}\|_{F}^{2}+\lambda\|\mathbf{D}-\mathbf{D}_{0}\|_{1}
\hspace{3mm} \mathrm{s.t.} \hspace{1mm} \mathbf{D}_{0}=\mathbf{D}_{0}\mathbf{C}$ & $\mathbf{V}\mathcal{P}_{k}(\mathbf{\Sigma})\mathbf{V}^{T}$ & $  \left\{
   \begin{aligned}
   1 &\hspace{0.3cm} i \leq k  \\
   0 &\hspace{0.3cm} i > k  \\
   \end{aligned}
   \right.$
   & $\left\{
   \begin{aligned}
   	k&=\mathrm{argmin}_{r} r+\lambda\sum_{i>r}\sigma_{i}^{2}\\
   	r &= rank(\mathbf{D}-\mathbf{E}_{t}+\alpha_{t}^{-1}\mathbf{Y}_{t})\\
   \end{aligned}\right.$\\
   $\min \|\mathbf{C}\|_{\ast}+\lambda\|\mathbf{D}-\mathbf{D}_{0}\|_{1}
\hspace{3mm} \mathrm{s.t.} \hspace{1mm} \mathbf{D}_{0}=\mathbf{D}_{0}\mathbf{C}$ & $\mathbf{V}\mathcal{P}_{k}(\mathbf{\Sigma})\mathbf{V}^{T}$ & $  \left\{
   \begin{aligned}
   1 &\hspace{0.3cm} i \leq k  \\
   0 &\hspace{0.3cm} i > k  \\
   \end{aligned}
   \right.$
   & $\left\{
   \begin{aligned}
   	k&=\mathrm{argmin}_{r} r+\frac{\lambda}{2}\sum_{i>r}\sigma_{i}^{2}\\
   	r &= rank(\mathbf{D}-\mathbf{E}_{t}+\alpha_{t}^{-1}\mathbf{Y}_{t})\\
   \end{aligned}\right.$\\
   $\min \frac{1}{2}\|\mathbf{C}\|_{F}^{2}+\lambda\|\mathbf{D}-\mathbf{D}_{0}\|_{2,1}
\hspace{3mm} \mathrm{s.t.} \hspace{1mm} \mathbf{D}_{0}=\mathbf{D}_{0}\mathbf{C}$ & $\mathbf{V}\mathcal{P}_{k}(\mathbf{\Sigma})\mathbf{V}^{T}$ & $  \left\{
   \begin{aligned}
   1 &\hspace{0.3cm} i \leq k  \\
   0 &\hspace{0.3cm} i > k  \\
   \end{aligned}
   \right.$
   & $\left\{
   \begin{aligned}
   	k&=\mathrm{argmin}_{r} r+\lambda\sum_{i>r}\sigma_{i}^{2}\\
   	r &= rank(\mathbf{D}-\mathbf{E}_{t}+\alpha_{t}^{-1}\mathbf{Y}_{t})\\
   \end{aligned}\right.$\\
   $\min \|\mathbf{C}\|_{\ast}+\lambda\|\mathbf{D}-\mathbf{D}_{0}\|_{2,1}
	\hspace{3mm} \mathrm{s.t.} \hspace{1mm} \mathbf{D}_{0}=\mathbf{D}_{0}\mathbf{C}$ & $\mathbf{V}\mathcal{P}_{k}(\mathbf{\Sigma})\mathbf{V}^{T}$ & $  \left\{
   \begin{aligned}
   1 &\hspace{0.3cm} i \leq k  \\
   0 &\hspace{0.3cm} i > k  \\
   \end{aligned}
   \right.$
   & $\left\{
   \begin{aligned}
   	k&=\mathrm{argmin}_{r} r+\frac{\lambda}{2}\sum_{i>r}\sigma_{i}^{2}\\
   	r &= rank(\mathbf{D}-\mathbf{E}_{t}+\alpha_{t}^{-1}\mathbf{Y}_{t})\\
   \end{aligned}\right.$\\
\bottomrule
\end{tabular}
\end{footnotesize}
\end{center}
\end{table*}

\section{Connections Between Nuclear Norm and Frobenius Norm Based Representation}
\label{sec2}

For a data matrix $\mathbf{D}\in \mathds{R}^{m\times n}$, let $\mathbf{D}=\mathbf{U} \mathbf{\Sigma} \mathbf{V}^{T}$ and
$\mathbf{D}=\mathbf{U}_{r} \mathbf{\Sigma}_{r} \mathbf{V}_{r}^{T}$ be the full SVD and skinny SVD of $\mathbf{D}$, where $\mathbf{\Sigma}$ and $\mathbf{\Sigma}_{r}$ are in descending order and $r$ denotes the rank of $\mathbf{D}$. $\mathbf{U}_{r}$, $\mathbf{V}_{r}$ and $\mathbf{\Sigma}_{r}$ consist of the top (\textit{i.e.}, largest) $r$ singular vectors and singular values of $\mathbf{D}$. Similar to~\cite{Wright2009,Elhamifar2013,Liu2013,Favaro2011,Vidal2014}, we assume  $\mathbf{D}=\mathbf{D}_{0}+\mathbf{E}$, where $\mathbf{D}_{0}$ denotes the clean data set and $\mathbf{E}$ denotes the errors. 

Our theoretical results will show that the optimal solutions of Frobenius-norm and nuclear-norm based objective functions are in the form of $\mathbf{C}^{\ast}=\mathbf{V}\mathcal{P}(\mathbf{\Sigma})\mathbf{V}^{T}$, where $\mathcal{P}(\cdot)$ denotes the shrinkage-thresholding operator. In other words, FNR and NNR are two solutions on the column space of $\mathbf{D}$ and they are identical in some situations. This provides a unified framework to understand FNR and NNR. The  analysis will be performed considering several popular cases including exact/relax constraint and non-corrupted/corrupted data. When the dictionary has enough representative capacity, the objective function can be formulated with the exact constraint. Otherwise, the objective function is with the relax constraint. Noticed that, the exact constraint is considerably mild since most of data sets can be naturally reconstructed by itself in practice. With the exact constraint, many methods~\cite{Favaro2011,Vidal2014} have been proposed and shown competitive performance comparing with the relax case. Besides the situation of noise-free, we will also investigate the connections between FNR and NNR when the data set contains the Gaussian noise, the Laplacian noise, or sample-specified corruption. Fig.~\ref{fig1}, Tables~\ref{tab1} and~\ref{tab2} summary our results. Noticed that, in another independent work~\cite{Pan2014}, Pan et\ al. proposed a subspace clustering method based on Frobenius norm and and reported some similar conclusions with this work. Different from this work, we mainly devote to build the theoretical connections between NNR and FNR involving different settings rather than developing new algorithm.

\begin{figure}[!t]
\begin{center}
\includegraphics[width=0.88\textwidth]{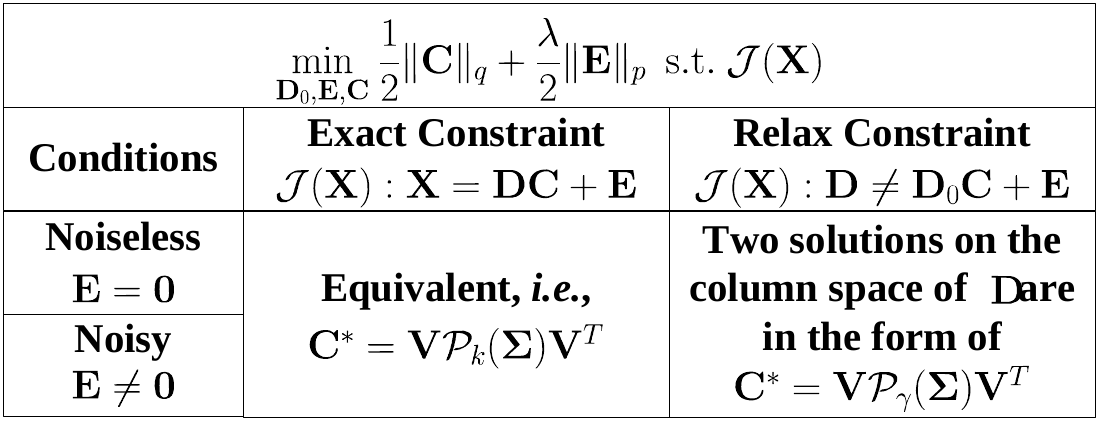}
\caption{\label{fig1} An overview of the connections between FNR ($\|\cdot\|_{q}=\|\cdot\|_{F}$) and NNR ($\|\cdot\|_{q}=\|\cdot\|_{\ast}$), where $\ell_p$ can be chosen as $\ell_1$-, $\ell_{2,1}$-, and $\ell_2$-norm corresponding to the Laplacian noise, Gaussian noise, and outliers, respectively. With the relax constraint, the major difference between NNR and FNR is the value of $\gamma$. More details are summarized in Tables~\ref{tab1} and~\ref{tab2}.}
\end{center}
\end{figure}
 
\subsection{Exact Constraint and Uncorrupted Data}

In the following analysis, we mainly focus on the case of self-expression because almost all works on NNR are carried out under such settings.

When the data set is uncorrupted and the dictionary has enough representative capacity, Liu \textit{et\ al.}~\cite{Liu2013} have shown that:

\begin{corollary}[\cite{Liu2013}]
\label{thm3}
Assume $\mathbf{D}\ne \mathbf{0}$ and $\mathbf{D}=\mathbf{DC}$ have feasible solution(s), \textit{i.e.}, $\mathbf{D}\in span(\mathbf{D})$. Then
\begin{equation}
	\label{Liu:eq1}
	\mathbf{C}^{\ast}=\mathbf{V}_{r}\mathbf{V}_{r}^{T}
\end{equation}
is the unique minimizer to 
\begin{equation}
\label{Liu:eq2}
\min \|\mathbf{C}\|_{\ast}\hspace{3mm} \mathrm{s.t.}\hspace{1mm}\mathbf{X}=\mathbf{D}\mathbf{C},
\end{equation}
 where $\mathbf{D}=\mathbf{U}_{r}\mathbf{\Sigma}_{r}\mathbf{V}^{T}_{r}$ is the skinny SVD of $\mathbf{D}$.
\end{corollary}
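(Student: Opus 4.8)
The plan is to derive this statement as a direct specialization of Theorem~\ref{thm1}, rather than re-running the nuclear-norm minimization argument from scratch. The key observation is that the self-expression problem in Eq.~(\ref{Liu:eq2}) is exactly the problem in Theorem~\ref{thm1} with the particular choice $\mathbf{X}=\mathbf{D}$. First I would verify that the feasibility hypothesis holds in this specialized setting: since $\mathbf{D}=\mathbf{D}\mathbf{I}$, the identity matrix is a feasible solution, so $\mathbf{D}\in span(\mathbf{D})$ holds trivially and the constraint set is nonempty. With $\mathbf{D}\ne\mathbf{0}$ assumed, the hypotheses of Theorem~\ref{thm1} are all met.

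Invoking Theorem~\ref{thm1} with $\mathbf{X}=\mathbf{D}$ then immediately yields that the unique minimizer of $\min\|\mathbf{C}\|_{\ast}$ subject to $\mathbf{D}=\mathbf{D}\mathbf{C}$ is $\mathbf{C}^{\ast}=\mathbf{D}^{\dag}\mathbf{D}$. The only remaining task is to simplify this expression into the claimed form $\mathbf{V}_{r}\mathbf{V}_{r}^{T}$. To do so I would substitute the skinny SVD $\mathbf{D}=\mathbf{U}_{r}\mathbf{\Sigma}_{r}\mathbf{V}_{r}^{T}$, whose pseudo-inverse is $\mathbf{D}^{\dag}=\mathbf{V}_{r}\mathbf{\Sigma}_{r}^{-1}\mathbf{U}_{r}^{T}$ (here $\mathbf{\Sigma}_{r}$ is invertible because it collects precisely the $r$ nonzero singular values). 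A short calculation then gives
\begin{equation}
\mathbf{D}^{\dag}\mathbf{D}=\mathbf{V}_{r}\mathbf{\Sigma}_{r}^{-1}\mathbf{U}_{r}^{T}\mathbf{U}_{r}\mathbf{\Sigma}_{r}\mathbf{V}_{r}^{T}=\mathbf{V}_{r}\mathbf{\Sigma}_{r}^{-1}\mathbf{\Sigma}_{r}\mathbf{V}_{r}^{T}=\mathbf{V}_{r}\mathbf{V}_{r}^{T},
\end{equation}
where the middle equality uses $\mathbf{U}_{r}^{T}\mathbf{U}_{r}=\mathbf{I}_{r}$ (orthonormality of the columns of $\mathbf{U}_{r}$) and the last uses $\mathbf{\Sigma}_{r}^{-1}\mathbf{\Sigma}_{r}=\mathbf{I}_{r}$.

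There is no genuine obstacle here; the result is a corollary precisely because the heavy lifting—establishing that $\mathbf{D}^{\dag}\mathbf{X}$ is the \emph{unique} nuclear-norm minimizer—has already been discharged by Theorem~\ref{thm1}. The only point requiring mild care is the pseudo-inverse identity $\mathbf{D}^{\dag}\mathbf{D}=\mathbf{V}_{r}\mathbf{V}_{r}^{T}$, which is a standard consequence of the SVD but worth stating explicitly since it recognizes $\mathbf{V}_{r}\mathbf{V}_{r}^{T}$ as the orthogonal projector onto the row space of $\mathbf{D}$. Uniqueness is inherited verbatim from Theorem~\ref{thm1}, so no separate argument is needed.
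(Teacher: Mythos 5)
Your proof is correct and matches the paper's approach exactly: the paper leaves Corollary~\ref{thm3} to the citation of Liu et al., but proves its Frobenius-norm twin (Corollary~\ref{thm4}) by precisely this argument---specialize the general theorem to $\mathbf{X}=\mathbf{D}$, write $\mathbf{D}^{\dag}=\mathbf{V}_{r}\mathbf{\Sigma}_{r}^{-1}\mathbf{U}_{r}^{T}$ from the skinny SVD, and simplify $\mathbf{D}^{\dag}\mathbf{D}=\mathbf{V}_{r}\mathbf{V}_{r}^{T}$. Your invocation of Theorem~\ref{thm1} with the trivial feasibility check $\mathbf{D}=\mathbf{D}\mathbf{I}$ and inheritance of uniqueness is exactly the intended derivation.
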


Considering the Frobenius norm, we can obtain the following result:
\begin{corollary}
\label{thm4}
Let $\mathbf{D}=\mathbf{U}_{r} \mathbf{\Sigma}_{r} \mathbf{V}_{r}^{T}$ be the skinny SVD of the data matrix $\mathbf{D}\ne \mathbf{0}$. The unique solution to
\begin{equation}
\label{thm1:equ1}
    \min\hspace{1mm}\|\mathbf{C}\|_{F} \hspace{3mm}
    \mathrm{s.t.} \hspace{1mm} \mathbf{D} = \mathbf{D}\mathbf{C}.
\end{equation}
is given by $\mathbf{C}^{\ast} = \mathbf{V}_{r}\mathbf{V}_{r}^{T}$, where $r$ is the rank of $\mathbf{D}$ and $\mathbf{D}$ denotes a given data set without corruptions.
\end{corollary}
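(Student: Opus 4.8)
The plan is to recognize Corollary~\ref{thm4} as the self-expressive specialization of Theorem~\ref{thm2}, so that the whole argument reduces to substituting $\mathbf{X}=\mathbf{D}$ and then simplifying the resulting pseudo-inverse expression via the skinny SVD. First I would check that the feasibility hypothesis of Theorem~\ref{thm2} holds in this setting: the constraint $\mathbf{D}=\mathbf{D}\mathbf{C}$ is trivially satisfiable (for instance by $\mathbf{C}=\mathbf{I}$), equivalently $\mathbf{D}\in span(\mathbf{D})$, and $\mathbf{D}\ne\mathbf{0}$ by assumption. Hence the hypotheses of Theorem~\ref{thm2} are met with the particular choice $\mathbf{X}=\mathbf{D}$.

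Next I would invoke Theorem~\ref{thm2} directly. Since Eq.(\ref{thm1:equ1}) is exactly Eq.(\ref{eq2.6}) with $\mathbf{X}$ replaced by $\mathbf{D}$, that theorem guarantees both the existence and the \emph{uniqueness} of the minimizer, given in closed form by
\begin{equation}
\mathbf{C}^{\ast}=\mathbf{D}^{\dag}\mathbf{D}.
\end{equation}
This already settles the uniqueness claim for free; no separate convexity or optimality argument is required.

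It then remains to identify $\mathbf{D}^{\dag}\mathbf{D}$ with $\mathbf{V}_{r}\mathbf{V}_{r}^{T}$. Using the skinny SVD $\mathbf{D}=\mathbf{U}_{r}\mathbf{\Sigma}_{r}\mathbf{V}_{r}^{T}$, the Moore--Penrose pseudo-inverse is $\mathbf{D}^{\dag}=\mathbf{V}_{r}\mathbf{\Sigma}_{r}^{-1}\mathbf{U}_{r}^{T}$, where $\mathbf{\Sigma}_{r}$ is invertible precisely because it collects the $r$ nonzero singular values. Substituting and using the orthonormality $\mathbf{U}_{r}^{T}\mathbf{U}_{r}=\mathbf{I}$ gives
\begin{equation}
\mathbf{D}^{\dag}\mathbf{D}=\mathbf{V}_{r}\mathbf{\Sigma}_{r}^{-1}\mathbf{U}_{r}^{T}\mathbf{U}_{r}\mathbf{\Sigma}_{r}\mathbf{V}_{r}^{T}=\mathbf{V}_{r}\mathbf{\Sigma}_{r}^{-1}\mathbf{\Sigma}_{r}\mathbf{V}_{r}^{T}=\mathbf{V}_{r}\mathbf{V}_{r}^{T},
\end{equation}
which is the asserted form. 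In truth there is no real obstacle here: the content is entirely carried by Theorem~\ref{thm2}, and the only point warranting care is bookkeeping with the skinny (as opposed to full) SVD, namely ensuring the cancellation $\mathbf{\Sigma}_{r}^{-1}\mathbf{\Sigma}_{r}=\mathbf{I}$ is legitimate, which holds exactly because the skinny factorization discards the zero singular values that would otherwise make $\mathbf{\Sigma}$ singular. The matching of this expression with Corollary~\ref{thm3} (the nuclear-norm counterpart, which also yields $\mathbf{V}_{r}\mathbf{V}_{r}^{T}$) is what realizes the claimed FNR--NNR equivalence in the exact, uncorrupted case.
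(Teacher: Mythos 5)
Your proposal is correct and follows essentially the same route as the paper's own proof: specialize Theorem~\ref{thm2} to $\mathbf{X}=\mathbf{D}$, write $\mathbf{D}^{\dag}=\mathbf{V}_{r}\mathbf{\Sigma}_{r}^{-1}\mathbf{U}_{r}^{T}$ from the skinny SVD, and cancel to get $\mathbf{D}^{\dag}\mathbf{D}=\mathbf{V}_{r}\mathbf{V}_{r}^{T}$. Your explicit verification of the feasibility hypothesis ($\mathbf{C}=\mathbf{I}$ is feasible) is a small addition the paper leaves implicit, but it does not change the argument.
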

\begin{proof}
Let $\mathbf{D}=\mathbf{U}_{r} \mathbf{\Sigma}_{r} \mathbf{V}_{r}^{T}$ be the skinny SVD of $\mathbf{D}$. The pseudo-inverse of $\mathbf{D}$ is $\mathbf{D}^{\dag} =
\mathbf{V}_{r} \mathbf{\Sigma}_{r}^{-1} \mathbf{U}_{r}^{T}$. By Theorem~\ref{thm2}, we obtain $\mathbf{C}^{\ast}=
\mathbf{V}_{r} \mathbf{\Sigma}_{r}^{-1} \mathbf{U}_{r}^{T}\mathbf{U}_{r} \mathbf{\Sigma}_{r} \mathbf{V}_{r}^{T}=\mathbf{V}_{r}\mathbf{V}_{r}^{T}$, as desired.
\end{proof}

From Corollaries~\ref{thm3} and \ref{thm4}, ones can find that NNR and FNR have the same optimal solution $\mathbf{V}_{r}\mathbf{V}_{r}^{T}$. This solution is also known as the shape interaction matrix~\cite{Costeira1998}. 

\subsection{Exact Constraint and Data Corrupted by Gaussian Noise}

When the data set contains Gaussian noises (\textit{i.e.}, $\mathbf{E}\ne \mathbf{0}$ and $\mathbf{E}$ is characterized by the Frobenius norm), we prove that 
\begin{equation}
\label{eq3.3}
\mathop{\min}_{\mathbf{C},\mathbf{D}_{0},\mathbf{E}}\|\mathbf{C}\|_{\ast}+\frac{\lambda}{2}\|\mathbf{E}\|_{F}^{2}
\hspace{3mm} \mathrm{s.t.} \hspace{1mm}\mathbf{D}=\mathbf{D}_{0}+\mathbf{E},\hspace{1mm}\mathbf{D}_{0}=\mathbf{D}_{0}\mathbf{C}
\end{equation}
and
\begin{equation}
\label{eq3.4}
\mathop{\min}_{\mathbf{C},\mathbf{D}_{0},\mathbf{E}}\frac{1}{2}\|\mathbf{C}\|_{F}^{2}+\frac{\lambda}{2}\|\mathbf{E}\|_{F}^{2}
\hspace{3mm} \mathrm{s.t.} \hspace{1mm}\mathbf{D}=\mathbf{D}_{0}+\mathbf{E},\hspace{1mm}\mathbf{D}_{0}=\mathbf{D}_{0}\mathbf{C}
\end{equation}
have the same minimizer in the form of $\mathbf{V}_{k}\mathbf{V}_{k}^{T}$, where $k$ is a parameter. By a simple transformation, we have the following results.
 
 \begin{theorem}[\cite{Favaro2011}]
 	\label{thm5}
 	Let $\mathbf{D}=\mathbf{U}\mathbf{\Sigma}\mathbf{V}^{T}$ be the SVD of the data matrix $\mathbf{D}$. The optimal solution to 
 	\begin{equation}
 		\mathop{\min}_{\mathbf{C},\mathbf{D}_{0}}\|\mathbf{C}\|_{\ast}+\frac{\lambda}{2}\|\mathbf{D}-\mathbf{D}_{0}\|_{F}^{2}
\hspace{3mm} \mathrm{s.t.} \hspace{1mm} \mathbf{D}_{0}=\mathbf{D}_{0}\mathbf{C},
 	\end{equation}
is given by $\mathbf{C}^{\ast}=\mathbf{V}_{k}\mathbf{V}_{k}^{T}$, where $\mathbf{\Sigma}_{k}$, $\mathbf{U}_{k}$, and $\mathbf{V}_{k}$ correspond to the top $k=\mathrm{argmin}_{r}r+\frac{\lambda}{2}\sum_{i>r}\sigma_{i}^{2}$ singular values and singular vectors of $\mathbf{D}$, respectively.
 \end{theorem}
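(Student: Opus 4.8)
The plan is to decouple the joint minimization over $\mathbf{C}$ and $\mathbf{D}_{0}$ into an inner problem over $\mathbf{C}$ (with $\mathbf{D}_{0}$ held fixed) and an outer problem over $\mathbf{D}_{0}$. First I would observe that for any fixed clean dictionary $\mathbf{D}_{0}$ the constraint $\mathbf{D}_{0}=\mathbf{D}_{0}\mathbf{C}$ is precisely a self-expression constraint, so the inner subproblem $\min_{\mathbf{C}}\|\mathbf{C}\|_{\ast}$ subject to $\mathbf{D}_{0}=\mathbf{D}_{0}\mathbf{C}$ is exactly the situation covered by Corollary~\ref{thm3}. Writing the skinny SVD $\mathbf{D}_{0}=\mathbf{U}_{0}\mathbf{\Sigma}_{0}\mathbf{V}_{0}^{T}$, its unique minimizer is $\mathbf{V}_{0}\mathbf{V}_{0}^{T}$, and since $\mathbf{V}_{0}\mathbf{V}_{0}^{T}$ is an orthogonal projector whose nonzero singular values all equal one, the attained inner optimum is $\|\mathbf{V}_{0}\mathbf{V}_{0}^{T}\|_{\ast}=\mathrm{rank}(\mathbf{D}_{0})$.

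Substituting this value back, the whole problem collapses to the rank-plus-error objective $\min_{\mathbf{D}_{0}}\mathrm{rank}(\mathbf{D}_{0})+\frac{\lambda}{2}\|\mathbf{D}-\mathbf{D}_{0}\|_{F}^{2}$ over the single matrix variable $\mathbf{D}_{0}$. I would then split this minimization according to the integer value $r=\mathrm{rank}(\mathbf{D}_{0})$: for each fixed $r$ the rank penalty is constant, so one only has to minimize $\|\mathbf{D}-\mathbf{D}_{0}\|_{F}^{2}$ over all matrices of rank at most $r$. By the Eckart--Young--Mirsky theorem this least-squares subproblem is solved by the truncated SVD $\mathbf{D}_{0}=\mathbf{U}_{r}\mathbf{\Sigma}_{r}\mathbf{V}_{r}^{T}$ of $\mathbf{D}$, with residual $\|\mathbf{D}-\mathbf{D}_{0}\|_{F}^{2}=\sum_{i>r}\sigma_{i}^{2}$.

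What remains is to optimize over the discrete level $r$, i.e.\ to evaluate $\min_{r} r+\frac{\lambda}{2}\sum_{i>r}\sigma_{i}^{2}$ and call the minimizing index $k$; this is exactly the stated $k=\mathrm{argmin}_{r}\,r+\frac{\lambda}{2}\sum_{i>r}\sigma_{i}^{2}$. For this optimal truncation the clean dictionary is $\mathbf{D}_{0}=\mathbf{U}_{k}\mathbf{\Sigma}_{k}\mathbf{V}_{k}^{T}$, whose right singular vectors are the top $k$ right singular vectors $\mathbf{V}_{k}$ of $\mathbf{D}$; feeding this back into the inner solution $\mathbf{V}_{0}\mathbf{V}_{0}^{T}$ yields $\mathbf{C}^{\ast}=\mathbf{V}_{k}\mathbf{V}_{k}^{T}$, as claimed.

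I expect the main obstacle to be the rigorous justification of the two-stage (profile) minimization together with the discontinuity of the rank functional: because $\mathrm{rank}(\cdot)$ is integer-valued and nonconvex, one cannot appeal to convex duality, and instead must argue that partitioning the feasible set by the value of $r$ and minimizing the smooth residual on each rank-$r$ stratum via Eckart--Young is legitimate and exhausts all candidates. A secondary point to check carefully is that the inner optimum $\mathrm{rank}(\mathbf{D}_{0})$ is attained by a feasible $\mathbf{C}$ for every $\mathbf{D}_{0}$ (so that the profiled objective is well defined), and that ties in the singular spectrum affect only the possibly non-unique choice of $\mathbf{D}_{0}$, not the optimal value.
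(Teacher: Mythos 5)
Your proof is correct, but note that the paper never actually proves Theorem~\ref{thm5}: it is imported verbatim from~\cite{Favaro2011}, so the only in-paper argument to compare against is the proof of its Frobenius-norm sibling, Theorem~\ref{thm6}. There the authors proceed quite differently: they take the inner solution $\mathbf{C}^{\ast}=\mathbf{V}_{r}\mathbf{V}_{r}^{T}$ as given, form the Lagrangian $\mathcal{L}=\frac{1}{2}\|\mathbf{C}\|_{F}^{2}+\frac{\lambda}{2}\|\mathbf{D}-\mathbf{D}_{0}\|_{F}^{2}+\langle\mathbf{\beta},\mathbf{D}_{0}-\mathbf{D}_{0}\mathbf{C}\rangle$, and use stationarity in $\mathbf{D}_{0}$ and $\mathbf{C}$ to force the residual into the complementary singular subspaces, $\mathbf{D}-\mathbf{D}_{0}=\frac{1}{\lambda}\mathbf{U}_{c}\mathbf{\Sigma}_{c}\mathbf{V}_{c}^{T}$, after which the cost collapses to $\frac{1}{2}r+\frac{\lambda}{2}\sum_{i>r}\sigma_{i}^{2}$ and is minimized over the integer $r$. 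Your route --- profile out $\mathbf{C}$ via Corollary~\ref{thm3} (the inner optimum equals $\mathrm{rank}(\mathbf{D}_{0})$ because $\mathbf{V}_{0}\mathbf{V}_{0}^{T}$ is an orthogonal projector), reduce to $\min_{\mathbf{D}_{0}}\mathrm{rank}(\mathbf{D}_{0})+\frac{\lambda}{2}\|\mathbf{D}-\mathbf{D}_{0}\|_{F}^{2}$, and dispatch the outer problem by Eckart--Young--Mirsky --- is essentially the argument of the original source, and it is arguably more rigorous: it avoids treating a nonconvex, rank-constrained problem purely through first-order conditions, and avoids the paper's unjustified step that $\mathbf{M}\mathbf{V}_{c}$ ``can be chosen'' diagonal. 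What the Lagrangian machinery buys the paper is reusability in the relaxed-constraint settings (Theorems~\ref{thm8} and~\ref{thm10}), where the inner minimizer is a shrinkage of a projector rather than a projector, so your rank-counting step has no analogue there. Two small repairs would make your write-up airtight: (i) stratify by $\mathrm{rank}(\mathbf{D}_{0})\le r$ rather than $=r$; the clean argument is the two-sided bound $\mathrm{rank}(\mathbf{D}_{0})=s\Rightarrow\ \mathrm{objective}\ \ge s+\frac{\lambda}{2}\sum_{i>s}\sigma_{i}^{2}\ \ge\ \min_{r}\bigl(r+\frac{\lambda}{2}\sum_{i>r}\sigma_{i}^{2}\bigr)$, matched by evaluating at the truncated SVD, which sidesteps any attainment issue on the exact-rank stratum; (ii) observe that the minimizing index $k$ automatically satisfies $\sigma_{k}>0$ (otherwise decreasing $r$ strictly decreases the cost), so $\mathbf{U}_{k}\mathbf{\Sigma}_{k}\mathbf{V}_{k}^{T}$ genuinely has rank $k$ and its right singular vectors are $\mathbf{V}_{k}$, which your final substitution requires.
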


\begin{theorem}
\label{thm6}
Let $\mathbf{D}=\mathbf{U\Sigma}\mathbf{V}^{T}$ be the full SVD of $\mathbf{D}\in \mathds{R}^{m\times n}$, where the diagonal entries of $\mathbf{\Sigma}$ are in descending order, $\mathbf{U}$ and $\mathbf{V}$ are the left and right singular vectors of $\mathbf{D}$, respectively. Suppose there exists a clean data set and errors, denoted by $\mathbf{D}_{0}$ and $\mathbf{E}$, respectively. The optimal $\mathbf{C}$ to 
\begin{equation}
\label{thm2:equ1}
\min_{\mathbf{D}_{0}, \mathbf{C}}\hspace{1mm}\frac{1}{2}\|\mathbf{C}\|_{F}^{2}+\frac{\lambda}{2}\|\mathbf{D}-\mathbf{D}_{0}\|_{F}^{2} \hspace{3mm}\mathrm{s.t.}\hspace{1mm} \mathbf{D}_{0}=\mathbf{D}_{0}\mathbf{C}
	\end{equation}
	is given by 
	\begin{equation}
		\mathbf{C}^{\ast}=\mathbf{V}\mathcal{P}_{k}(\mathbf{\Sigma})\mathbf{V}=\mathbf{V}_{k}\mathbf{V}_{k}^{T},
	\end{equation}
	where the operator $\mathcal{P}_{k}(\mathbf{\Sigma})$ performs hard thresholding on the diagonal entries of $\mathbf{\Sigma}$ by
\begin{equation}
  \mathcal{P}_{k}(\sigma_{i})=
  \left\{
   \begin{aligned}
   1 &\hspace{0.6cm} i \leq k  \\
   0 &\hspace{0.6cm} i > k  \\
   \end{aligned}
   \right.
  \end{equation}
  $\lambda$ is a balanced parameter, $k=\mathrm{argmin}_{r} r+\lambda\sum_{i>r}\sigma_{i}^{2}$, and $\sigma_{i}$ denotes the $i$-th diagonal entry of $\mathbf{\Sigma}$. \textit{i.e.}, $\mathbf{V}_{k}$ consists of the first $k$ column vectors of $\mathbf{V}$.
\end{theorem}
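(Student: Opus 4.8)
The plan is to eliminate $\mathbf{C}$ first and then optimize over the clean dictionary $\mathbf{D}_{0}$, reducing the joint problem to a rank-penalized low-rank approximation that is solved by the Eckart--Young theorem. This parallels the structure behind Theorem~\ref{thm5}, the only change being that the squared Frobenius penalty on $\mathbf{C}$ replaces the nuclear penalty there.

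First I would fix $\mathbf{D}_{0}$ and solve the inner problem $\min_{\mathbf{C}}\frac{1}{2}\|\mathbf{C}\|_{F}^{2}$ subject to $\mathbf{D}_{0}=\mathbf{D}_{0}\mathbf{C}$. Since $t\mapsto\frac{1}{2}t^{2}$ is strictly increasing on $[0,\infty)$, this has the same minimizer as $\min_{\mathbf{C}}\|\mathbf{C}\|_{F}$ over the same feasible set, so Corollary~\ref{thm4} (equivalently Theorem~\ref{thm2}) applies with $\mathbf{D}_{0}$ in place of $\mathbf{D}$: writing the skinny SVD $\mathbf{D}_{0}=\mathbf{U}_{0}\mathbf{\Sigma}_{0}\mathbf{V}_{0}^{T}$ with $r_{0}=\mathrm{rank}(\mathbf{D}_{0})$, the optimizer is $\mathbf{C}=\mathbf{V}_{0}\mathbf{V}_{0}^{T}$, the orthogonal projector onto the row space of $\mathbf{D}_{0}$. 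Because $\mathbf{V}_{0}$ has orthonormal columns, $\|\mathbf{V}_{0}\mathbf{V}_{0}^{T}\|_{F}^{2}=\mathrm{tr}(\mathbf{V}_{0}\mathbf{V}_{0}^{T}\mathbf{V}_{0}\mathbf{V}_{0}^{T})=\mathrm{tr}(\mathbf{V}_{0}^{T}\mathbf{V}_{0})=r_{0}$, so the inner minimum equals $\frac{1}{2}\,\mathrm{rank}(\mathbf{D}_{0})$. This is the key identity that drives the whole argument; note that the nuclear-norm version of Theorem~\ref{thm5} uses the companion fact $\|\mathbf{V}_{0}\mathbf{V}_{0}^{T}\|_{\ast}=r_{0}$, which holds because $\mathbf{V}_{0}\mathbf{V}_{0}^{T}$ is an idempotent symmetric matrix with exactly $r_{0}$ unit eigenvalues.

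Substituting back, the outer problem becomes $\min_{\mathbf{D}_{0}}\frac{1}{2}\,\mathrm{rank}(\mathbf{D}_{0})+\frac{\lambda}{2}\|\mathbf{D}-\mathbf{D}_{0}\|_{F}^{2}$, equivalently $\min_{\mathbf{D}_{0}}\mathrm{rank}(\mathbf{D}_{0})+\lambda\|\mathbf{D}-\mathbf{D}_{0}\|_{F}^{2}$. I would decouple this by target rank: for each fixed $r$, minimizing $\|\mathbf{D}-\mathbf{D}_{0}\|_{F}^{2}$ over all $\mathbf{D}_{0}$ with $\mathrm{rank}(\mathbf{D}_{0})\leq r$ is, by Eckart--Young, achieved by the truncated SVD $\mathbf{D}_{0}=\mathbf{U}_{r}\mathbf{\Sigma}_{r}\mathbf{V}_{r}^{T}$ with optimal value $\sum_{i>r}\sigma_{i}^{2}$. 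Hence the outer problem collapses to the scalar minimization $\min_{r}\,r+\lambda\sum_{i>r}\sigma_{i}^{2}$, whose minimizer is exactly the stated $k$. For this $k$ the optimal clean dictionary is $\mathbf{D}_{0}=\mathbf{U}_{k}\mathbf{\Sigma}_{k}\mathbf{V}_{k}^{T}$, whose right singular vectors are the columns of $\mathbf{V}_{k}$; feeding this into the inner solution gives $\mathbf{C}^{\ast}=\mathbf{V}_{k}\mathbf{V}_{k}^{T}=\mathbf{V}\mathcal{P}_{k}(\mathbf{\Sigma})\mathbf{V}^{T}$, as claimed.

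The main obstacle is the interface between the integer-valued rank term and the continuous approximation term in the outer problem, which I resolve precisely through this inner-continuous / outer-discrete decoupling: the inner step is handled by Eckart--Young and the outer step is a finite search over $r\in\{0,1,\dots,\mathrm{rank}(\mathbf{D})\}$. The one point requiring care is uniqueness, since Eckart--Young singles out a unique optimal $\mathbf{D}_{0}$ (and hence a unique $\mathbf{V}_{k}$) only when $\sigma_{k}>\sigma_{k+1}$; I would therefore either assume a spectral gap at the threshold or adopt a tie-breaking convention on $\mathbf{V}$ so that $\mathbf{C}^{\ast}=\mathbf{V}_{k}\mathbf{V}_{k}^{T}$ is well defined. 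Apart from that, the derivation is a direct transcription of the argument behind Theorem~\ref{thm5}.
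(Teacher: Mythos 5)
Your proof is correct, and it shares the paper's overall skeleton---both arguments first collapse the minimization over $\mathbf{C}$ via Theorem~\ref{thm2} to get $\mathbf{C}^{\ast}=\mathbf{V}_{r}\mathbf{V}_{r}^{T}$ with inner cost $\tfrac{1}{2}\mathrm{rank}(\mathbf{D}_{0})$, and both terminate in the same scalar search $\min_{r}\, r+\lambda\sum_{i>r}\sigma_{i}^{2}$---but the middle step, bridging the SVD of $\mathbf{D}_{0}$ to that of $\mathbf{D}$, is done by a genuinely different device. The paper forms the Lagrangian $\frac{1}{2}\|\mathbf{C}\|_{F}^{2}+\frac{\lambda}{2}\|\mathbf{D}-\mathbf{D}_{0}\|_{F}^{2}+\langle\mathbf{\beta},\mathbf{D}_{0}-\mathbf{D}_{0}\mathbf{C}\rangle$, deduces from stationarity that $\mathbf{\beta}=\mathbf{U}_{r}\mathbf{\Sigma}_{r}^{-1}\mathbf{V}_{r}^{T}+\mathbf{U}_{c}\mathbf{M}$ and hence $\lambda(\mathbf{D}-\mathbf{D}_{0})=\mathbf{U}_{c}\mathbf{M}\mathbf{V}_{c}\mathbf{V}_{c}^{T}$, and then argues the residual can be taken as $\frac{1}{\lambda}\mathbf{U}_{c}\mathbf{\Sigma}_{c}\mathbf{V}_{c}^{T}$, so that the SVD of $\mathbf{D}$ splits into the $(\mathbf{U}_{r},\mathbf{V}_{r})$ block of $\mathbf{D}_{0}^{\ast}$ plus an orthogonal residual block; you instead optimize over $\mathbf{D}_{0}$ at each fixed rank budget and invoke Eckart--Young. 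Your route buys rigor: it is a global-optimality argument throughout, whereas the paper's computation only verifies first-order conditions and contains the somewhat ad hoc step of ``choosing'' $\mathbf{M}\mathbf{V}_{c}=\mathbf{\Sigma}_{c}$ to be diagonal (the assertion that this choice minimizes the residual is precisely what Eckart--Young makes rigorous). What the paper's route buys is the explicit multiplier and residual structure $\mathbf{D}-\mathbf{D}_{0}^{\ast}=\frac{1}{\lambda}\mathbf{U}_{c}\mathbf{\Sigma}_{c}\mathbf{V}_{c}^{T}$, which serves as the template reused in the proof of Theorem~\ref{thm10} for the relaxed-constraint case. Finally, your uniqueness caveat---that a tie $\sigma_{k}=\sigma_{k+1}$ makes $\mathbf{V}_{k}$, hence $\mathbf{C}^{\ast}$, non-unique without a gap assumption or tie-breaking convention---is a genuine point that the paper passes over silently.
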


\begin{proof}
Let $\mathbf{D}_{0}^{\ast} $ be the optimal solution to Eq.(\ref{thm2:equ1}) and its skinny SVD be $\mathbf{D}_{0}^{\ast}=\mathbf{U}_{r}\mathbf{\Sigma}_{r}\mathbf{V}_{r}^{T}$, where $r$ is the rank of $\mathbf{D}_{0}^{\ast}$. Let $\mathbf{U}_{c}$ and $\mathbf{V}_{c}$ be the basis that orthogonal to $\mathbf{U}_{r}$ and $\mathbf{V}_{r}$, respectively. Clearly, $\mathbf{I}=\mathbf{V}_{r}\mathbf{V}_{r}^{T}+\mathbf{V}_{c}\mathbf{V}_{c}^{T}$. By Corollary~\ref{thm2}, we have $\mathbf{C}^{\ast}=\mathbf{V}_{r}\mathbf{V}_{r}^{T}$.  Next, we will bridge $\mathbf{V}_{r}$ and $\mathbf{V}$.

Use the method of Lagrange multipliers, we obtain
\begin{equation}
\label{thm2:equ3}
\mathcal{L}=\frac{1}{2}\left
\|\mathbf{C}\right\|_{F}^{2}+\frac{\lambda}{2}\left\|\mathbf{D}-\mathbf{D}_{0}\right\|_{F}^{2}+\langle\mathbf{\beta}, \mathbf{D}_{0}-\mathbf{D}_{0}\mathbf{C}\rangle,
\end{equation}
where $\mathbf{\beta}$ is the Lagrange multiplier. \\
Letting $\frac{\partial{\mathcal{L}}}{\partial{\mathbf{D}_{0}}}=0$, it gives that
\begin{equation}
\label{thm2:equ4}
\mathbf{\beta}\mathbf{V}_{c}\mathbf{V}_{c}^{T}=\lambda(\mathbf{D}-\mathbf{D}_{0}).
\end{equation}
Letting $\frac{\partial{\mathcal{L}}}{\partial{\mathbf{C}}}=0$, it gives that
\begin{equation}
\label{thm2:equ5}
\mathbf{V}_{r}\mathbf{V}_{r}^{T}=\mathbf{V}_{r}\mathbf{\Sigma}_{r}\mathbf{U}_{r}^{T}\mathbf{\beta}.
\end{equation}
Thus, $\mathbf{\beta}$ must be in the form of $\mathbf{\beta}=\mathbf{U}_{r}\mathbf{\Sigma}_{r}^{-1}\mathbf{V}_{r}^{T}+\mathbf{U}_{c}\mathbf{M}$ for some $\mathbf{M}$. Substituting this into (\ref{thm2:equ4}), it given that 
\begin{equation}
\label{thm2:equ6}
\mathbf{U}_{c}\mathbf{M}\mathbf{V}_{c}\mathbf{V}_{c}^{T}=\lambda(\mathbf{D}-\mathbf{D}_{0}).
\end{equation}
Then, we have $\|\mathbf{D}-\mathbf{D}_{0}\|_{F}^{2}=\frac{1}{\lambda^{2}}\|\mathbf{M}\mathbf{V}_{c}\|_{F}^{2}$. Since $\mathbf{V}_{c}^{T}\mathbf{V}_{c}=\mathbf{I}$, $\|\mathbf{D}-\mathbf{D}_{0}\|_{F}^{2}$ is minimized when $\mathbf{M}\mathbf{V}_{c}$ is a diagonal matrix and can be chosen as $\mathbf{M}\mathbf{V}_{c}=\mathbf{\Sigma}_{c}$. Then, $\mathbf{D}-\mathbf{D}_{0}=\frac{1}{\lambda}\mathbf{U}_{c}\mathbf{\Sigma}_{c}\mathbf{V}_{c}^{T}$. Consequently, the SVD of $\mathbf{D}$ can be rewritten as
\begin{equation}
\mathbf{D} = \mathbf{U}\mathbf{\Sigma}\mathbf{V}^{T}=
\left[                 
      \mathbf{U}_{r}\  \mathbf{U}_{c}\\  
\right]     
\left[                 
  \begin{array}{cc}   
      \mathbf{\Sigma}_{r} & \mathbf{0}\\  
      \mathbf{0} &  \frac{1}{\lambda}\mathbf{\Sigma}_{c}\\ 
     \end{array}
\right]  
\left[                 
      \mathbf{V}_{r}\  \mathbf{V}_{c}\\  
\right]^{T}    . 
\end{equation}
Thus, the minimal cost of (\ref{thm2:equ1}) is given by 
\begin{equation}
\label{thm2:equ7}
\begin{aligned}
\mathcal{L}_{\min}
&=\frac{1}{2}\|\mathbf{V}_{r}\mathbf{V}_{r}^{T}\|_{F}^{2}+\frac{\lambda}{2}\|\frac{1}{\lambda}\mathbf{\Sigma}_{c }\|_{F}^{2}\notag\\
&=\frac{1}{2} r+\frac{\lambda}{2}\sum_{i=r+1}^{\min\{m,n\}}\sigma_{i}^{2},
\end{aligned}
\end{equation}
where $\sigma_{i}$ is the $i$-th largest singular value of $\mathbf{D}$. Let $k$ be the optimal $r$, then, $k=\mathrm{argmin}_{r}r+\lambda\sum_{i>r}\sigma_{i}^{2}$. 
\end{proof}

From Theorems~\ref{thm5} and~\ref{thm6}, ones can find that the values of $k$ are slightly different. However, such difference cannot affect the equivalence conclusion because $k$ depends on the user-specified parameter $\lambda$. Moreover, this difference actually results from the constant term in our objective function.

 \subsection{Relaxed Constraint and Uncorrupted Data}

In this section, we discuss the connections between FNR and NNR when the dictionary is uncorrupted and has limited representative capacity. The objective functions are
\begin{equation}
\label{eq3.5}
\min_{\mathbf{C}}\frac{1}{2}\|\mathbf{C}\|_{F}^{2}+\frac{\gamma}{2}\|\mathbf{D}-\mathbf{D}\mathbf{C}\|_{F}^{2},
\end{equation}
and 
\begin{equation}
\label{eq3.6}
\min_{\mathbf{C}}\|\mathbf{C}\|_{\ast}+\frac{\gamma}{2}\|\mathbf{D}-\mathbf{D}\mathbf{C}\|_{F}^{2}.
\end{equation}

In a lot works such as~\cite{Lu2012,Zhang2011}, (\ref{eq3.5}) is minimized at $\mathbf{C}^{\ast}=(\mathbf{D}^{T}\mathbf{D}+\gamma\mathbf{I})^{-1}\mathbf{D}^{T}\mathbf{D}$. In this paper, we will give another form of the solution to (\ref{eq3.5}) and the new solution is performing shrinkage operation on the right eigenvectors of $\mathbf{D}$, like NNR. 

\begin{theorem}[\cite{Favaro2011}]
	\label{thm7}
	Let $\mathbf{D}=\mathbf{U}\mathbf{\Sigma}\mathbf{V}^{T}$ be the SVD of a given matrix $\mathbf{D}$. The optimal solution to 
	\begin{equation}
	\min_{\mathbf{C}}\|\mathbf{C}\|_{\ast}+\frac{\gamma}{2}\|\mathbf{D}-\mathbf{D}\mathbf{C}\|_{F}^{2}
	\end{equation}
	is
	\begin{equation}
		\mathbf{C}^{\ast}=\mathbf{V}_{1}\left(\mathbf{I}-\frac{1}{\gamma}\mathbf{\Sigma}_{1}^{-2}\right)\mathbf{V}_{1}^{T},
	\end{equation}
 where $\mathbf{U}=[\mathbf{U}_{1}\ \mathbf{U}_{2}]$, $\mathbf{\Sigma}=diag(\mathbf{\Sigma}_{1}, \mathbf{\Sigma}_{2})$, and $\mathbf{V}=[\mathbf{\mathbf{V}_{1}}\ \mathbf{V}_{2}]$ are partitioned according to the sets $\mathbf{I}_{1}=\{i:\sigma_{i}>1/\sqrt{\gamma} \}$ and $\mathbf{I}_{2}=\{i:\sigma_{i}\leq 1/\sqrt{\gamma} \}$.
\end{theorem}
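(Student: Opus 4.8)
The plan is to exploit the unitary invariance of both the nuclear and Frobenius norms to collapse the matrix problem onto the right singular basis of $\mathbf{D}$, reduce it to a separable scalar problem, and read off the shrinkage rule. Concretely, using the full SVD $\mathbf{D}=\mathbf{U}\mathbf{\Sigma}\mathbf{V}^{T}$ with $\mathbf{V}\in\mathds{R}^{n\times n}$ orthogonal, I would perform the change of variable $\mathbf{C}=\mathbf{V}\mathbf{A}\mathbf{V}^{T}$, a bijection on $n\times n$ matrices since $\mathbf{V}$ is orthogonal. Then $\|\mathbf{C}\|_{\ast}=\|\mathbf{A}\|_{\ast}$, and because $\mathbf{V}^{T}(\mathbf{I}-\mathbf{C})=(\mathbf{I}-\mathbf{A})\mathbf{V}^{T}$, the residual becomes $\|\mathbf{D}-\mathbf{D}\mathbf{C}\|_{F}=\|\mathbf{\Sigma}(\mathbf{I}-\mathbf{A})\|_{F}$. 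The problem is thus equivalent to
\[
\min_{\mathbf{A}}\ \|\mathbf{A}\|_{\ast}+\frac{\gamma}{2}\|\mathbf{\Sigma}(\mathbf{I}-\mathbf{A})\|_{F}^{2},
\]
and a direct expansion of the second term yields $\sum_{i}\sigma_{i}^{2}(1-a_{ii})^{2}+\sum_{i\ne j}\sigma_{i}^{2}a_{ij}^{2}$, so the off-diagonal entries of $\mathbf{A}$ enter the fit only through a nonnegative penalty.

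Second, I would argue that an optimal $\mathbf{A}$ may be taken diagonal. Replacing $\mathbf{A}$ by $\mathrm{diag}(\mathbf{A})$ does not increase the Frobenius term, since it deletes the nonnegative off-diagonal contribution; the only thing left to check is that it does not increase the nuclear norm, i.e. $\sum_{i}|a_{ii}|\le\|\mathbf{A}\|_{\ast}$. This follows immediately from the duality $\|\mathbf{A}\|_{\ast}=\max_{\|\mathbf{Z}\|_{2}\le 1}\langle\mathbf{Z},\mathbf{A}\rangle$ by choosing the admissible matrix $\mathbf{Z}=\mathrm{diag}(\mathrm{sign}(a_{ii}))$, whose spectral norm is at most $1$. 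Hence the diagonal of any candidate does at least as well, and the search reduces to diagonal $\mathbf{A}=\mathrm{diag}(a_{1},\dots,a_{n})$, for which the objective separates into $\sum_{i}\bigl(|a_{i}|+\tfrac{\gamma}{2}\sigma_{i}^{2}(1-a_{i})^{2}\bigr)$.

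Third, each scalar problem $\min_{a}|a|+\tfrac{\gamma}{2}\sigma_{i}^{2}(1-a)^{2}$ is strictly convex and dispatched by elementary subdifferential calculus: a negative $a$ is never stationary, the stationary point on $a>0$ is $a=1-1/(\gamma\sigma_{i}^{2})$ and is admissible exactly when $\sigma_{i}>1/\sqrt{\gamma}$, and otherwise $a=0$ is optimal. This gives $a_{i}^{\ast}=1-1/(\gamma\sigma_{i}^{2})$ for $i\in\mathbf{I}_{1}$ and $a_{i}^{\ast}=0$ for $i\in\mathbf{I}_{2}$; reassembling via $\mathbf{C}^{\ast}=\mathbf{V}\,\mathrm{diag}(a_{i}^{\ast})\,\mathbf{V}^{T}$ and discarding the vanishing block yields exactly $\mathbf{C}^{\ast}=\mathbf{V}_{1}(\mathbf{I}-\tfrac{1}{\gamma}\mathbf{\Sigma}_{1}^{-2})\mathbf{V}_{1}^{T}$. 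Global optimality is then automatic because the original objective is convex.

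The \textbf{main obstacle} is the diagonalization step: the nuclear norm is not separable, so one cannot decouple the singular values directly, and the whole argument hinges on the pinching-type inequality $\sum_{i}|a_{ii}|\le\|\mathbf{A}\|_{\ast}$. A fully rigorous alternative that avoids diagonalization is a verification proof: since the objective is convex, it suffices to exhibit the claimed $\mathbf{C}^{\ast}$ and check the first-order condition $\gamma\mathbf{D}^{T}(\mathbf{D}-\mathbf{D}\mathbf{C}^{\ast})\in\partial\|\mathbf{C}^{\ast}\|_{\ast}$ using the standard characterization of the nuclear-norm subdifferential. I would keep this in reserve in case the rectangular, rank-deficient bookkeeping (where some $\sigma_{i}=0$ and the corresponding rows are unpenalized in the fit) makes the reduction argument delicate.
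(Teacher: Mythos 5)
Your proof is correct, but it is not the paper's argument, because the paper gives no proof of this statement at all: Theorem~\ref{thm7} is imported verbatim from Favaro and Vidal~\cite{Favaro2011}, and the only proof the paper supplies in this family is for the Frobenius analogue (Theorem~\ref{thm8}), which is precisely the verification-by-stationarity argument you kept in reserve. Your primary route is genuinely different and self-contained: the change of variables $\mathbf{C}=\mathbf{V}\mathbf{A}\mathbf{V}^{T}$ using unitary invariance of both norms, the reduction to diagonal $\mathbf{A}$ via the pinching inequality $\sum_{i}|a_{ii}|\le\|\mathbf{A}\|_{\ast}$ (your dual certificate $\mathbf{Z}=\mathrm{diag}(\mathrm{sign}(a_{ii}))$ is admissible since its spectral norm is at most $1$, and zeroing off-diagonals only removes the nonnegative terms $\sigma_{i}^{2}a_{ij}^{2}$ from the fit), and the separable scalar subdifferential analysis that produces the threshold at $\sigma_{i}=1/\sqrt{\gamma}$. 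Each approach buys something: yours derives the shrinkage rule rather than positing it, handles rectangular and rank-deficient $\mathbf{D}$ by direct bookkeeping (rows with $\sigma_{i}=0$ are unpenalized in the fit but still penalized by the nuclear norm, so they are zeroed), and establishes global optimality by direct comparison against an arbitrary competitor, with no appeal to differentiability. The paper's stationarity style, as used for Theorem~\ref{thm8}, is shorter, but it does not transfer literally to the nuclear-norm objective: $\|\mathbf{C}\|_{\ast}$ is nonsmooth, so ``set $\partial\mathcal{L}/\partial\mathbf{C}$ to zero'' must be replaced by the subdifferential condition $\gamma\mathbf{D}^{T}(\mathbf{D}-\mathbf{D}\mathbf{C}^{\ast})\in\partial\|\mathbf{C}^{\ast}\|_{\ast}$, exactly as your fallback plan states. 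The one cosmetic point worth tightening is the claim that each scalar problem is strictly convex; this holds only when $\sigma_{i}>0$, though for $\sigma_{i}=0$ the minimizer of $|a|$ is still uniquely $0$, so the conclusion is unaffected.
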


\begin{theorem}
\label{thm8}
Let $\mathbf{D}=\mathbf{U\Sigma}\mathbf{V}^{T}$ be the full SVD of $\mathbf{D}\in \mathds{R}^{m\times n}$, where the diagonal entries of $\mathbf{\Sigma}$ are in descending order, $\mathbf{U}$ and $\mathbf{V}$ are corresponding left and right singular vectors, respectively. The optimal $\mathbf{C}$ to 
\begin{equation}
\label{thm6:equ1}
\min_{\mathbf{C}}\frac{1}{2}\|\mathbf{C}\|_{F}^{2}+\frac{\gamma}{2}\|\mathbf{D}-\mathbf{D}\mathbf{C}\|_{F}^{2},
\end{equation}
is given by 
\begin{equation}
\label{thm6:equ2}
	\mathbf{C}^{\ast}=\mathbf{V}\mathcal{P}_{\gamma}(\mathbf{\Sigma})\mathbf{V}^{T}=\mathbf{V}_{r}\left(\mathbf{I}-(\mathbf{I}+\gamma\mathbf{\Sigma}_{r}^{2})^{-1}\right)\mathbf{V}_{r}^{T},
\end{equation}
where $\gamma$ is a balanced factor and the operator $\mathcal{P}_{\gamma}(\mathbf{\Sigma})$ performs shrinkage-thresholding on the diagonal entries of $\mathbf{\Sigma}$ by 
 \begin{equation}
 \label{thm6:equ3}
  \mathcal{P}_{\gamma}(\sigma_{i})=
  \left\{
   \begin{aligned}
   1-\frac{1}{1+\gamma \sigma_{i}^{2}} &\hspace{0.6cm} i \leq r  \\
   0                                                   &\hspace{0.6cm} i > r  \\
   \end{aligned},
   \right.
  \end{equation}
   and $r$ is the rank of $\mathbf{D}$ and $\sigma_{i}$ denotes the $i$th diagonal entry of $\mathbf{\Sigma}$.
\end{theorem}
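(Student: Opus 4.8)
The plan is to exploit the fact that the objective in (\ref{thm6:equ1}) is an unconstrained, smooth, and strictly convex function of $\mathbf{C}$, so that its unique global minimizer is characterized by the single stationarity condition $\partial/\partial\mathbf{C}=\mathbf{0}$. Strict convexity holds because the term $\frac{1}{2}\|\mathbf{C}\|_{F}^{2}$ is $1$-strongly convex while $\frac{\gamma}{2}\|\mathbf{D}-\mathbf{D}\mathbf{C}\|_{F}^{2}$ is convex; hence a minimizer exists and is unique, and it suffices to locate a stationary point. First I would differentiate: using $\partial(\frac{1}{2}\|\mathbf{C}\|_{F}^{2})/\partial\mathbf{C}=\mathbf{C}$ and $\partial(\frac{\gamma}{2}\|\mathbf{D}-\mathbf{D}\mathbf{C}\|_{F}^{2})/\partial\mathbf{C}=-\gamma\mathbf{D}^{T}(\mathbf{D}-\mathbf{D}\mathbf{C})$, setting the sum to zero yields the normal equation $(\mathbf{I}+\gamma\mathbf{D}^{T}\mathbf{D})\mathbf{C}=\gamma\mathbf{D}^{T}\mathbf{D}$.

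Next I would solve this equation in closed form. Because $\mathbf{D}^{T}\mathbf{D}\succeq\mathbf{0}$, the matrix $\mathbf{I}+\gamma\mathbf{D}^{T}\mathbf{D}$ is positive definite and hence invertible for every $\gamma>0$, giving $\mathbf{C}^{\ast}=\gamma(\mathbf{I}+\gamma\mathbf{D}^{T}\mathbf{D})^{-1}\mathbf{D}^{T}\mathbf{D}$. Substituting the full SVD $\mathbf{D}=\mathbf{U}\mathbf{\Sigma}\mathbf{V}^{T}$, so that $\mathbf{D}^{T}\mathbf{D}=\mathbf{V}\mathbf{\Sigma}^{T}\mathbf{\Sigma}\mathbf{V}^{T}$, and using orthogonality of $\mathbf{V}$ (so that $\mathbf{I}=\mathbf{V}\mathbf{V}^{T}$), both factors diagonalize simultaneously in the basis $\mathbf{V}$. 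Concretely I would write $\mathbf{I}+\gamma\mathbf{D}^{T}\mathbf{D}=\mathbf{V}(\mathbf{I}+\gamma\mathbf{\Sigma}^{T}\mathbf{\Sigma})\mathbf{V}^{T}$, invert it as $\mathbf{V}(\mathbf{I}+\gamma\mathbf{\Sigma}^{T}\mathbf{\Sigma})^{-1}\mathbf{V}^{T}$, and collapse the product down to $\mathbf{C}^{\ast}=\mathbf{V}\,\gamma\mathbf{\Sigma}^{T}\mathbf{\Sigma}(\mathbf{I}+\gamma\mathbf{\Sigma}^{T}\mathbf{\Sigma})^{-1}\mathbf{V}^{T}$, i.e. a single diagonal filter conjugated by $\mathbf{V}$.

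Finally I would read off the diagonal entries. The $i$-th entry equals $\frac{\gamma\sigma_{i}^{2}}{1+\gamma\sigma_{i}^{2}}$, and the elementary identity $\frac{\gamma\sigma_{i}^{2}}{1+\gamma\sigma_{i}^{2}}=1-\frac{1}{1+\gamma\sigma_{i}^{2}}$ rewrites this exactly as the shrinkage-thresholding operator $\mathcal{P}_{\gamma}(\sigma_{i})$ of (\ref{thm6:equ3}). For indices $i>r$ we have $\sigma_{i}=0$, so the corresponding entry is $0$ and those directions drop out; restricting to the $r$ nonzero singular values then yields the compact form $\mathbf{C}^{\ast}=\mathbf{V}_{r}(\mathbf{I}-(\mathbf{I}+\gamma\mathbf{\Sigma}_{r}^{2})^{-1})\mathbf{V}_{r}^{T}$ asserted in (\ref{thm6:equ2}).

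I do not expect a serious obstacle here; the computation is essentially routine ridge-regression algebra. The only points that demand care are the bookkeeping for the rectangular SVD — distinguishing $\mathbf{\Sigma}^{T}\mathbf{\Sigma}$ (of size $n\times n$) from $\mathbf{\Sigma}\mathbf{\Sigma}^{T}$, and verifying that the zero singular values automatically annihilate the matching columns of $\mathbf{V}$ so that the full-$\mathbf{V}$ expression coincides with the skinny-$\mathbf{V}_{r}$ expression — together with invoking positive definiteness (hence invertibility) and strict convexity to guarantee that the unique stationary point is indeed the global minimizer, rather than merely asserting it.
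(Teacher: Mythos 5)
Your proof is correct, and it follows a genuinely different (and more self-contained) route than the paper's. Both arguments start from the same stationarity condition $\mathbf{C}-\gamma\mathbf{D}^{T}\mathbf{D}(\mathbf{I}-\mathbf{C})=\mathbf{0}$, but they diverge from there. The paper treats the claimed solution $\mathbf{C}^{\ast}=\mathbf{V}_{r}\bigl(\mathbf{I}-(\mathbf{I}+\gamma\mathbf{\Sigma}_{r}^{2})^{-1}\bigr)\mathbf{V}_{r}^{T}$ as a candidate (imported from the analogous nuclear-norm result of Favaro--Vidal, Theorem~\ref{thm7}) and merely \emph{verifies} that the gradient vanishes there, using the skinny SVD $\mathbf{D}=\mathbf{U}_{r}\mathbf{\Sigma}_{r}\mathbf{V}_{r}^{T}$ and the resolution of identity $\mathbf{I}=\mathbf{V}_{r}\mathbf{V}_{r}^{T}+\mathbf{V}_{c}\mathbf{V}_{c}^{T}$; uniqueness and the passage from ``stationary point'' to ``global minimizer'' are left implicit. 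You instead \emph{solve} the normal equation outright: positive definiteness of $\mathbf{I}+\gamma\mathbf{D}^{T}\mathbf{D}$ gives $\mathbf{C}^{\ast}=\gamma(\mathbf{I}+\gamma\mathbf{D}^{T}\mathbf{D})^{-1}\mathbf{D}^{T}\mathbf{D}$, and conjugating by $\mathbf{V}$ diagonalizes this into the shrinkage form, with the zero singular values automatically killing the columns beyond $r$. Your route buys three things: it derives the solution rather than assuming it; strong convexity makes existence, uniqueness, and global optimality explicit rather than tacit; and it exhibits in one computation the equivalence between the standard ridge-regression solution $(\mathbf{D}^{T}\mathbf{D}+\gamma^{-1}\mathbf{I})^{-1}\mathbf{D}^{T}\mathbf{D}$ (the form the paper itself quotes from prior work just before Theorem~\ref{thm7}, up to its $\gamma$ versus $1/\gamma$ parameterization) and the spectral-shrinkage form $\mathbf{V}\mathcal{P}_{\gamma}(\mathbf{\Sigma})\mathbf{V}^{T}$ --- which is precisely the point of the theorem. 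What the paper's verification buys in exchange is shorter algebra, but only once the answer is already known.
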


 \begin{proof}
	 Letting $\mathcal{L}$ denote the loss, 
      and then we have
 	\begin{equation}
 		\label{thm6:equ5}
 		\frac{\partial{\mathcal{L}}}{\partial{\mathbf{C}}} = \mathbf{C}-\gamma\mathbf{D}^{T}\mathbf{D}(\mathbf{I}-\mathbf{C}).
 	\end{equation}
 	
    Next, we will show that (\ref{thm6:equ2}) is the minimizer of $\mathcal{L}$ since $ \frac{\partial{\mathcal{L}}}{\partial{\mathbf{C}^{\ast}}}=0$.
    Letting $\mathbf{M}=(\mathbf{I}+\gamma\mathbf{\Sigma}_{r}^{2})^{-1}$ and substituting (\ref{thm6:equ2}) into (\ref{thm6:equ5}), we have  
    \begin{equation}
    \label{thm6:equ6}
	    \frac{\partial{\mathcal{L}}}{\partial{\mathbf{C}^{\ast}}} 
	    = \mathbf{V}_{r}\left(\mathbf{I}-\mathbf{M}\right)\mathbf{V}_{r}^{T}-\gamma\mathbf{D}^{T}\mathbf{D}\left(\mathbf{I}-\mathbf{V}_{r}(\mathbf{I}-\mathbf{M})\mathbf{V}_{r}^{T}\right).
    \end{equation}
    
    Let $\mathbf{V}_{r}$ and $\mathbf{V}_{c}$ be mutually orthogonal, then $\mathbf{I}=\mathbf{V}_{r}\mathbf{V}_{r}^{T}+\mathbf{V}_{c}\mathbf{V}_{c}^{T}$. Moreover, let the skinny SVD of $\mathbf{D}$ be $\mathbf{U}_{r}\mathbf{\Sigma}_{r}\mathbf{V}_{r}^{T}$, we obtain
    \begin{align}
    	\label{thm6:equ7}
    	\frac{\partial{\mathcal{L}}}{\partial{\mathbf{C}^{\ast}}} 
    	&= \mathbf{V}_{r}\mathbf{V}_{r}^{T}-\mathbf{V}_{r}\mathbf{M}\mathbf{V}_{r}^{T}-\gamma\mathbf{V}_{r}\mathbf{\Sigma}_{r}^{2}\mathbf{V}_{r}^{T}\left(\mathbf{V}_{c}\mathbf{V}_{c}^{T}+\mathbf{V}_{r}\mathbf{M}\mathbf{V}_{r}^{T}\right)  \notag\\
	    &=\mathbf{V}_{r}\mathbf{V}_{r}^{T}-\mathbf{V}_{r}\mathbf{M}\mathbf{V}_{r}^{T}-\gamma\mathbf{V}_{r}\mathbf{\Sigma}_{r}^{2}\mathbf{M}\mathbf{V}_{r}^{T}\notag\\
	    &=0
    \end{align}   
	    as desired. 
 \end{proof}
  
 \subsection{Relax Constraint and Data Corrupted by Gaussian Noise}
 
 Suppose the data set is corrupted by $\mathbf{E}$ and has limited representative capacity, the problems can be formulated as follows:
 \begin{equation}
 \label{eq3.7}
 \min_{\mathbf{C},\mathbf{D}_{0}}\|\mathbf{C}\|_{F}+\frac{\lambda}{2}\|\mathbf{D}-\mathbf{D}_{0}\|_{F}^{2}+\frac{\gamma}{2}\|\mathbf{D}_{0}-\mathbf{D}_{0}\mathbf{C}\|_{F}^{2},
 \end{equation}
and 
  \begin{equation}
  \label{eq3.8}
 \min_{\mathbf{C},\mathbf{D}_{0}}\|\mathbf{C}\|_{\ast}+\frac{\lambda}{2}\|\mathbf{D}-\mathbf{D}_{0}\|_{F}^{2}+\frac{\gamma}{2}\|\mathbf{D}_{0}-\mathbf{D}_{0}\mathbf{C}\|_{F}^{2}.
 \end{equation}

 \begin{theorem}[\cite{Favaro2011}]
 	\label{thm9}
 	Let $\mathbf{D}=\mathbf{U}\mathbf{\Sigma}\mathbf{V}^{T}$ be the SVD of the data matrix $\mathbf{D}$. The optimal solution to 
 	\begin{equation}
    \min_{\mathbf{C},\mathbf{D}_{0}}\|\mathbf{C}\|_{\ast}+\frac{\lambda}{2}\|\mathbf{D}-\mathbf{D}_{0}\|_{F}^{2}+\frac{\gamma}{2}\|\mathbf{D}_{0}-\mathbf{D}_{0}\mathbf{C}\|_{F}^{2}.
   \end{equation}
   is given by 
   \begin{equation}
   	\mathbf{C}^{\ast}=\mathbf{V}_{1}(\mathbf{I}-\frac{1}{\gamma}\mathbf{\Omega}_{1}^{-2})\mathbf{V}_{1}^{T},
   \end{equation}
 where each entry of $\mathbf{\Omega}=diag(\omega_{1},\cdots,\omega_{n})$ is obtained from one entry of $\mathbf{\Sigma}=diag(\sigma_{1},\cdots,\sigma_{n})$ as the solution to
 \begin{equation}
 \sigma_{i}= 
  \left\{
   \begin{aligned}
   \omega_{i}+\frac{1}{\lambda\gamma}\omega_{i}^{-3} &\hspace{0.3cm} \omega_{i} > 1/ \sqrt{\gamma}  \\
   \omega_{i}+\frac{\gamma}{\lambda}\omega_{i}                                                  &\hspace{0.3cm} \omega_{i} \leq 1/ \sqrt{\gamma} \\
   \end{aligned},
   \right.
 \end{equation}
that minimizes the cost, and the matrices $\mathbf{U}=[\mathbf{U}_{1}\ \mathbf{U}_{2}]$, $\mathbf{\Omega}=diag(\mathbf{\omega}_{1}, \mathbf{\omega}_{2})$, and $\mathbf{V}=[\mathbf{V}_{1}\ \mathbf{V}_{2}]$ are partitioned according to the sets $\mathbf{I}_{1}=\{i:\omega_{i}>1/\sqrt{\gamma} \}$ and $\mathbf{I}_{2}=\{i:\omega_{i}\leq 1/\sqrt{\gamma} \}$.

 \end{theorem}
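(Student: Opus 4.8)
The plan is to decouple the joint minimization over $\mathbf{C}$ and $\mathbf{D}_{0}$ into a nested pair of problems: an inner minimization over $\mathbf{C}$ with $\mathbf{D}_{0}$ held fixed, followed by an outer minimization over $\mathbf{D}_{0}$. First I would note that, for fixed $\mathbf{D}_{0}$, the objective reduces to $\min_{\mathbf{C}}\|\mathbf{C}\|_{\ast}+\frac{\gamma}{2}\|\mathbf{D}_{0}-\mathbf{D}_{0}\mathbf{C}\|_{F}^{2}$, which is precisely the problem solved in Theorem~\ref{thm7}. Writing the skinny SVD of $\mathbf{D}_{0}$ as $\mathbf{D}_{0}=\mathbf{U}_{0}\mathbf{\Omega}\mathbf{V}_{0}^{T}$ and partitioning the columns according to $\mathbf{I}_{1}=\{i:\omega_{i}>1/\sqrt{\gamma}\}$, Theorem~\ref{thm7} hands us the inner minimizer $\mathbf{C}=\mathbf{V}_{0,1}(\mathbf{I}-\frac{1}{\gamma}\mathbf{\Omega}_{1}^{-2})\mathbf{V}_{0,1}^{T}$, where $\mathbf{V}_{0,1}$ and $\mathbf{\Omega}_{1}$ collect the parts indexed by $\mathbf{I}_{1}$.

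Substituting this minimizer back, I would compute the optimal inner cost as a separable spectral function of the singular values of $\mathbf{D}_{0}$. Since the inner minimizer is symmetric positive semidefinite with eigenvalues $1-1/(\gamma\omega_{i}^{2})$ on $\mathbf{I}_{1}$ and $0$ elsewhere, both $\|\mathbf{C}\|_{\ast}$ and the residual $\frac{\gamma}{2}\|\mathbf{D}_{0}(\mathbf{I}-\mathbf{C})\|_{F}^{2}$ split over the singular directions, giving the per-direction cost
\[
g(\omega_{i})=
\begin{cases}
1-\dfrac{1}{2\gamma\omega_{i}^{2}}, & \omega_{i}>1/\sqrt{\gamma},\\
\dfrac{\gamma}{2}\omega_{i}^{2}, & \omega_{i}\leq 1/\sqrt{\gamma}.
\end{cases}
\]
The outer problem then becomes $\min_{\mathbf{D}_{0}}\frac{\lambda}{2}\|\mathbf{D}-\mathbf{D}_{0}\|_{F}^{2}+\sum_{i}g(\omega_{i})$, in which the penalty depends on $\mathbf{D}_{0}$ only through its singular values.

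The crucial reduction is to show that an optimal $\mathbf{D}_{0}$ may be chosen to share its singular vectors with $\mathbf{D}$. Here I would invoke von Neumann's trace inequality: for any $\mathbf{D}_{0}$ with fixed singular values $\{\omega_{i}\}$, one has $\|\mathbf{D}-\mathbf{D}_{0}\|_{F}^{2}\geq\sum_{i}(\sigma_{i}-\omega_{i})^{2}$, with equality when the singular vectors of $\mathbf{D}_{0}$ align with those of $\mathbf{D}$ in matched descending order. Because $g$ is blind to the singular vectors, alignment is always optimal, so the outer problem collapses to the decoupled scalar problems $\min_{\omega_{i}\geq 0}\frac{\lambda}{2}(\sigma_{i}-\omega_{i})^{2}+g(\omega_{i})$. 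Differentiating on each branch produces the two stationarity relations $\sigma_{i}=\omega_{i}+\frac{1}{\lambda\gamma}\omega_{i}^{-3}$ for $\omega_{i}>1/\sqrt{\gamma}$ and $\sigma_{i}=\omega_{i}+\frac{\gamma}{\lambda}\omega_{i}$ for $\omega_{i}\leq 1/\sqrt{\gamma}$, which is exactly the system in the statement; feeding $\mathbf{V}_{0}=\mathbf{V}$ back into the inner minimizer yields $\mathbf{C}^{\ast}=\mathbf{V}_{1}(\mathbf{I}-\frac{1}{\gamma}\mathbf{\Omega}_{1}^{-2})\mathbf{V}_{1}^{T}$, as claimed.

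The hard part will be controlling the non-convexity of the scalar subproblem rather than the algebra. The objective $\frac{\lambda}{2}(\sigma_{i}-\omega_{i})^{2}+g(\omega_{i})$ is non-convex across the threshold $1/\sqrt{\gamma}$, and on the upper branch the stationarity equation $\sigma_{i}=\omega_{i}+\frac{1}{\lambda\gamma}\omega_{i}^{-3}$ can have several roots (its right-hand side need not be monotone when $\gamma$ is large relative to $\lambda$). One must therefore treat the stationarity relation as defining \emph{candidate} $\omega_{i}$ and then select, among the two branches and the boundary $\omega_{i}=1/\sqrt{\gamma}$, the candidate that actually minimizes the scalar cost---this is the content of the phrase ``that minimizes the cost'' in the statement. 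A final consistency check is required: von Neumann's inequality presumes the chosen $\{\omega_{i}\}$ are sorted in the same order as $\{\sigma_{i}\}$, so I would verify (via a supermodularity/single-crossing argument in $(\sigma_{i},\omega_{i})$) that the global minimizer map $\sigma_{i}\mapsto\omega_{i}$ is monotone non-decreasing, which makes the sorted pairing self-consistent and closes the proof.
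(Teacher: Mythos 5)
You should first note that the paper never proves this statement: it is imported verbatim from Favaro and Vidal~\cite{Favaro2011}, and the only proof the paper gives in this regime is for the Frobenius-norm analogue, Theorem~\ref{thm10}. Measured against that in-house proof, your route is genuinely different and substantially more careful. The paper's argument is purely first-order: it substitutes the inner solution for $\mathbf{C}$ (from Theorem~\ref{thm8}) into the Lagrangian, sets $\partial\mathcal{L}/\partial\mathbf{D}_{0}=0$, and reads the relation between $\sigma_{i}$ and $\omega_{i}$ off the identity $\mathbf{D}=\mathbf{U}_{r}\bigl(\mathbf{\Omega}_{r}+\frac{\gamma}{\lambda}\mathbf{\Omega}_{r}(\mathbf{I}+\gamma\mathbf{\Omega}_{r}^{2})^{-2}\bigr)\mathbf{V}_{r}^{T}$, silently assuming that an optimal $\mathbf{D}_{0}$ shares its singular vectors with $\mathbf{D}$ and that the stationary point is a global minimizer. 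Your marginalization argument supplies exactly what that style of proof omits: eliminating $\mathbf{C}$ via Theorem~\ref{thm7} gives the separable spectral cost $g$ (your computation of both branches checks out, including continuity at $\omega_{i}=1/\sqrt{\gamma}$), von Neumann's trace inequality justifies restricting to $\mathbf{D}_{0}$ aligned with the singular vectors of $\mathbf{D}$, and the resulting scalar problems yield precisely the two stationarity equations in the statement. You are also right that the scalar problems are non-convex and that the upper-branch equation can have several roots---indeed $\omega\mapsto\omega+\frac{1}{\lambda\gamma}\omega^{-3}$ fails to be monotone on $(1/\sqrt{\gamma},\infty)$ exactly when $\lambda<3\gamma$---so the candidate comparison you describe is genuinely needed and is what the phrase ``that minimizes the cost'' encodes. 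The one step you defer (monotonicity of the minimizer map $\sigma_{i}\mapsto\omega_{i}$, needed so that the sorted pairing presumed by von Neumann's inequality is self-consistent) is a real obligation, but you have correctly identified it; carrying it out would make your write-up strictly more rigorous than the paper's parallel argument, and your overall strategy is essentially that of the original Favaro--Vidal proof rather than anything appearing in this paper.
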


\begin{theorem}
	\label{thm10}
	 Let $\mathbf{D}=\mathbf{U}_{r}\mathbf{\Sigma}_{r}\mathbf{V}_{r}$ be the skinny SVD of $\mathbf{D}\in \mathds{R}^{m\times n}$, where $r$ denotes the rank of $\mathbf{D}$ and the diagonal entries of $\mathbf{\Omega}_{r}$ is in descending order. The optimal solutions to
	\begin{equation}
		\label{thm4:equ1}
		 \min_{\mathbf{C},\mathbf{D}_{0}}\|\mathbf{C}\|_{F}+\frac{\lambda}{2}\|\mathbf{D}-\mathbf{D}_{0}\|_{F}^{2}+\frac{\gamma}{2}\|\mathbf{D}_{0}-\mathbf{D}_{0}\mathbf{C}\|_{F}^{2},
	\end{equation}
	are given by 
	\begin{equation}
		\label{thm4:equ2}
		\mathbf{D}_{0}^{\ast}=\mathbf{U}_{r}\mathbf{\Omega}_{r}\mathbf{V}_{r}^{T},
	\end{equation}
	and 
	\begin{equation}
		\label{thm4:equ3}
		\mathbf{C}^{\ast}
		=\mathbf{V}_{r}\left(\mathbf{I}-(\mathbf{I}+\gamma\mathbf{\Omega}_{r}^{2})^{-1}\right)\mathbf{V}_{r}^{T},
	\end{equation}
	where $\sigma_{i}$ and $\omega_{i}$ are the diagonal entries on $\mathbf{\Sigma}_{r}$ and $\Omega_{r}$, respectively. 
	\begin{equation}
	\label{thm4:equ4}
		\sigma_{i}=\omega_{i}+\frac{\gamma \omega_{i}}{\lambda(1+\gamma\omega_{i}^{2} )^{2}}.
	\end{equation}

\end{theorem}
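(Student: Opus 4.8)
The plan is to decouple the joint minimization over $\mathbf{C}$ and $\mathbf{D}_{0}$ by first fixing $\mathbf{D}_{0}$, optimizing over $\mathbf{C}$, and then substituting the resulting minimizer back to optimize over $\mathbf{D}_{0}$ alone. For any fixed $\mathbf{D}_{0}$ the term $\frac{\lambda}{2}\|\mathbf{D}-\mathbf{D}_{0}\|_{F}^{2}$ is constant in $\mathbf{C}$, so the inner problem coincides exactly with the one solved in Theorem~\ref{thm8}, now with $\mathbf{D}_{0}$ playing the role of the dictionary. Writing the skinny SVD $\mathbf{D}_{0}=\mathbf{U}_{r}\mathbf{\Omega}_{r}\mathbf{V}_{r}^{T}$, Theorem~\ref{thm8} immediately returns the inner minimizer $\mathbf{C}^{\ast}=\mathbf{V}_{r}(\mathbf{I}-(\mathbf{I}+\gamma\mathbf{\Omega}_{r}^{2})^{-1})\mathbf{V}_{r}^{T}$, which is precisely (\ref{thm4:equ3}). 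This eliminates the variable $\mathbf{C}$.

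Next I would substitute $\mathbf{C}^{\ast}$ back into the cost. Using $\mathbf{I}=\mathbf{V}_{r}\mathbf{V}_{r}^{T}+\mathbf{V}_{c}\mathbf{V}_{c}^{T}$, a short computation gives $\mathbf{D}_{0}-\mathbf{D}_{0}\mathbf{C}^{\ast}=\mathbf{U}_{r}\mathbf{\Omega}_{r}(\mathbf{I}+\gamma\mathbf{\Omega}_{r}^{2})^{-1}\mathbf{V}_{r}^{T}$, so both $\frac{1}{2}\|\mathbf{C}^{\ast}\|_{F}^{2}$ and $\frac{\gamma}{2}\|\mathbf{D}_{0}-\mathbf{D}_{0}\mathbf{C}^{\ast}\|_{F}^{2}$ depend on $\mathbf{D}_{0}$ only through its singular values $\{\omega_{i}\}$. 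The reduced problem then reads
\begin{equation}
\min_{\mathbf{D}_{0}}\ \sum_{i}h(\omega_{i})+\frac{\lambda}{2}\|\mathbf{D}-\mathbf{D}_{0}\|_{F}^{2},
\end{equation}
where $h$ collects the two singular-value-only terms and is a fixed scalar function.

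The crux is to show that the optimal $\mathbf{D}_{0}$ inherits the singular vectors of $\mathbf{D}$. Since the sum $\sum_{i}h(\omega_{i})$ is invariant under a change of singular vectors, for any fixed singular-value profile the coupling term $\|\mathbf{D}-\mathbf{D}_{0}\|_{F}^{2}=\|\mathbf{D}\|_{F}^{2}+\|\mathbf{D}_{0}\|_{F}^{2}-2\langle\mathbf{D},\mathbf{D}_{0}\rangle$ is minimized by maximizing $\langle\mathbf{D},\mathbf{D}_{0}\rangle$. By von Neumann's trace inequality this inner product is at most $\sum_{i}\sigma_{i}\omega_{i}$, with equality exactly when $\mathbf{D}_{0}$ is built from the singular vectors of $\mathbf{D}$ with singular values paired in matching descending order. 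This pins down $\mathbf{D}_{0}^{\ast}=\mathbf{U}_{r}\mathbf{\Omega}_{r}\mathbf{V}_{r}^{T}$ with $\mathbf{U}_{r},\mathbf{V}_{r}$ the top singular vectors of $\mathbf{D}$, proving (\ref{thm4:equ2}); moreover $\omega_{i}=0$ whenever $\sigma_{i}=0$, so the rank of $\mathbf{D}_{0}^{\ast}$ does not exceed $r$. With the singular vectors aligned, the cost separates as $\sum_{i}\bigl[h(\omega_{i})+\frac{\lambda}{2}(\sigma_{i}-\omega_{i})^{2}\bigr]$, whose summands can be minimized independently.

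Finally I would differentiate the $i$-th summand in $\omega_{i}$ and set it to zero. After placing the two derivatives of $h$ over the common denominator $(1+\gamma\omega_{i}^{2})^{3}$, they collapse to $\gamma\omega_{i}/(1+\gamma\omega_{i}^{2})^{2}$; balancing this against $\lambda(\omega_{i}-\sigma_{i})$ from the coupling term yields exactly the stationarity relation (\ref{thm4:equ4}). The main obstacle is the von Neumann alignment step, which is what rigidly ties the singular vectors of $\mathbf{D}_{0}^{\ast}$ to those of $\mathbf{D}$; once that is in place the remaining scalar optimization is routine.
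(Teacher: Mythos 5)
Your proof is correct and shares the paper's first move --- both you and the paper fix $\mathbf{D}_{0}$, write its skinny SVD $\mathbf{U}_{r}\mathbf{\Omega}_{r}\mathbf{V}_{r}^{T}$, and invoke Theorem~\ref{thm8} to eliminate $\mathbf{C}$ --- but you handle the remaining minimization over $\mathbf{D}_{0}$ by a genuinely different route. The paper works with first-order conditions only: it sets $\partial\mathcal{L}/\partial\mathbf{D}_{0}=0$, substitutes $\mathbf{C}^{\ast}$, and reads off from the resulting identity $\mathbf{D}=\mathbf{U}_{r}\bigl(\mathbf{\Omega}_{r}+\frac{\gamma}{\lambda}\mathbf{\Omega}_{r}(\mathbf{I}+\gamma\mathbf{\Omega}_{r}^{2})^{-2}\bigr)\mathbf{V}_{r}^{T}$ that this is ``a valid SVD of $\mathbf{D}$'', which delivers the shared singular vectors and relation (\ref{thm4:equ4}) in one stroke; this is shorter, but it only characterizes stationary points, and the alignment of the singular vectors of $\mathbf{D}_{0}^{\ast}$ with those of $\mathbf{D}$ is asserted from the form of the stationarity equation rather than proved optimal. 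You instead substitute $\mathbf{C}^{\ast}$ into the cost, observe that the two $\mathbf{C}$-dependent terms become a unitarily invariant function $\sum_{i}h(\omega_{i})$ of the singular values alone (your identity $\mathbf{D}_{0}-\mathbf{D}_{0}\mathbf{C}^{\ast}=\mathbf{U}_{r}\mathbf{\Omega}_{r}(\mathbf{I}+\gamma\mathbf{\Omega}_{r}^{2})^{-1}\mathbf{V}_{r}^{T}$ is correct), and then use von Neumann's trace inequality to show that any \emph{globally} optimal $\mathbf{D}_{0}$ must inherit $\mathbf{D}$'s singular vectors with matched descending ordering, after which the problem separates into independent scalar problems whose stationarity condition is exactly (\ref{thm4:equ4}); your collapse of $h'(\omega_{i})$ to $\gamma\omega_{i}/(1+\gamma\omega_{i}^{2})^{2}$ checks out. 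What your route buys is a real global-optimality justification of the singular-vector alignment, precisely the step the paper glosses over; what the paper's route buys is brevity. Two caveats apply to both proofs rather than to yours alone: like the paper (whose proof cites Theorem~\ref{thm6} but actually uses Theorem~\ref{thm8}), you silently treat the first term as $\frac{1}{2}\|\mathbf{C}\|_{F}^{2}$ although the statement writes $\|\mathbf{C}\|_{F}$; and neither argument settles which root of (\ref{thm4:equ4}) to select when the map $\omega\mapsto\omega+\frac{\gamma\omega}{\lambda(1+\gamma\omega^{2})^{2}}$ is non-monotone, an ambiguity already present in the theorem statement itself.
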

\begin{proof}
	 Letting $\mathcal{L}$ denote the loss, 		
	 we have
	\begin{equation}
		\label{thm4:equ6}
		\frac{\partial{\mathcal{L}}}{\partial{\mathbf{D}_{0}}}=-\lambda(\mathbf{D}-\mathbf{D}_{0})+\gamma\mathbf{D}_{0}(\mathbf{I}-\mathbf{C}^{\ast})(\mathbf{I}-\mathbf{C}^{\ast})^{T}.
	\end{equation}
	
	Next, we will bridge $\mathbf{D}$ and $\mathbf{D}_{0}$.
	Let $\mathbf{D}_{0}=\mathbf{U}_{r}\mathbf{\Omega}_{r}\mathbf{V}_{r}^{T}$ be the skinny SVD of $\mathbf{D}_{0}$. From Theorem~\ref{thm6}, we have $\mathbf{C}^{\ast}
		=\mathbf{V}_{r}\left(\mathbf{I}-(\mathbf{I}+\gamma\mathbf{\Omega}_{r}^{2})^{-1}\right)\mathbf{V}_{r}^{T}$.  Substituting this into (\ref{thm4:equ6}) and letting $\frac{\partial{\mathcal{L}}}{\partial{\mathbf{D}_{0}}}=0$, we obtain
		\begin{equation}
			\label{thm4:equ7}
			\mathbf{D} = \mathbf{U}_{r}\left(\mathbf{\Omega}_{r}+\frac{\gamma}{\lambda}\mathbf{\Omega}_{r}(\mathbf{I}+\gamma\mathbf{\Omega}_{r}^{2} )^{-2})  \right)
			\mathbf{V}_{r}^{T},
		\end{equation}
	which is a valid SVD of $\mathbf{D}$. 	
\end{proof}

Theorems~\ref{thm7}--\ref{thm10} establish the relationships between FNR and NNR in the case of the limited representative capacity. Although FNR and NNR are not identical in such settings, they can be unified into a framework, \textit{i.e.}, both two methods obtain a solution from the column space of $\mathbf{D}$. The major difference between them is the adopted scaling factor. Moreover, NNR and FNR will truncates the trivial entries of coefficients in the case of uncorrupted data. With respect to corrupted case, two methods only scales the self-expressive coefficients by performing shrinkage.

\subsection{Exact Constraint and Data Corrupted by Laplacian Noise}

The above analysis are based on the noise-free or the  Gaussian noise assumptions. In this section, we investigate the Laplacian noise situation with the exact constraint. More specifically, we will prove that the optimal solutions to

\begin{equation}
\label{eqb3.1}
\mathop{\min}_{\mathbf{C},\mathbf{D}_{0},\mathbf{E}}\|\mathbf{C}\|_{\ast}+\lambda\|\mathbf{E}\|_{1}
\hspace{3mm} \mathrm{s.t.} \hspace{1mm}\mathbf{D}=\mathbf{D}_{0}+\mathbf{E},\hspace{1mm}\mathbf{D}_{0}=\mathbf{D}_{0}\mathbf{C}
\end{equation}
and
\begin{equation}
\label{eqb3.2}
\mathop{\min}_{\mathbf{C},\mathbf{D}_{0},\mathbf{E}}\frac{1}{2}\|\mathbf{C}\|_{F}^{2}+\lambda\|\mathbf{E}\|_{1}
\hspace{3mm} \mathrm{s.t.} \hspace{1mm}\mathbf{D}=\mathbf{D}_{0}+\mathbf{E},\hspace{1mm}\mathbf{D}_{0}=\mathbf{D}_{0}\mathbf{C}
\end{equation}
have the same form. 

As $\mathbf{D}_{0}$ is unknown and $\ell_1$-norm has no closed-form solution, we can solve Eqs.(\ref{eqb3.1}) and (\ref{eqb3.2}) using the augmented Lagrange multiplier method (ALM)~\cite{Lin2011NIPS}.

\begin{proposition}[\cite{Favaro2011}]
	\label{thm11}
	The optimal solution to Eq.(\ref{eqb3.1}) is given by 
	\begin{equation}
	\label{thm11:equ1}
		\mathbf{C}^{\ast}=\mathbf{V}_{k}\mathbf{V}_{k}^{T},
	\end{equation}
	where $k=\mathrm{argmin}_{r} r+\frac{\lambda}{2}\sum_{i>r}\sigma_{i}^{2}$, $\mathbf{V}_{k}$ consists of the first $k$ column vectors of $\mathbf{V}$, and $\mathbf{V}$ is iteratively computed via the following updated rules: 
	\begin{equation}
		\label{thm11:equ2}
		\mathbf{U}\mathbf{\Sigma}\mathbf{V}^{T}=\mathbf{D}-\mathbf{E}_{t}+\alpha^{-1}_{t}\mathbf{Y}_{t}
	\end{equation}
	\begin{equation}
		\label{thm11:equ3}
			\mathbf{D}_{0_{t+1}}=\mathbf{U}\mathcal{P}_{k}(\mathbf{\Sigma})\mathbf{V}^{T}	
	\end{equation}
	\begin{equation}
		\label{thm11:equ4}
		\mathbf{E}_{t+1}=\mathcal{S}_{\gamma\alpha^{-1}}(\mathbf{D}-\mathbf{D}_{0_{t+1}}+\alpha_{t}^{-1}\mathbf{Y}_{t})
	\end{equation}
	\begin{equation}
		\label{thm11:equ5}
		\mathbf{Y}_{k+1}=\mathbf{Y}_{k}+\alpha_{k}(\mathbf{D}-\mathbf{D}_{0_{t+1}}-\mathbf{E}_{t+1})
	\end{equation}
	\begin{equation}
		\label{thm11:equ6}
		\alpha_{t+1}=\rho\alpha_{t},
	\end{equation}
where $\rho>1$ is the learning rate of ALM and $\mathcal{S}$ is a shrinkage-thresholding operator
\begin{equation}
	\label{thm11:equ7}
	\mathcal{S}_{\epsilon}(x)=
	\left\{
   \begin{aligned}
   x-\epsilon  &\hspace{0.3cm} x > \epsilon \\
   x+\epsilon &\hspace{0.3cm} x<-\epsilon \\
   0               &\hspace{0.3cm} else\\
   \end{aligned}
   \right.
\end{equation}	 	
\end{proposition}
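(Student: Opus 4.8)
The plan is to derive the alternating updates by applying the augmented Lagrange multiplier method to the \emph{linear} constraint $\mathbf{D}=\mathbf{D}_{0}+\mathbf{E}$ while keeping the self-expressive constraint $\mathbf{D}_{0}=\mathbf{D}_{0}\mathbf{C}$ attached to the $(\mathbf{C},\mathbf{D}_{0})$ block. Concretely, I would form
\[
\mathcal{L}_{\alpha_{t}}=\|\mathbf{C}\|_{\ast}+\lambda\|\mathbf{E}\|_{1}+\langle\mathbf{Y}_{t},\,\mathbf{D}-\mathbf{D}_{0}-\mathbf{E}\rangle+\frac{\alpha_{t}}{2}\|\mathbf{D}-\mathbf{D}_{0}-\mathbf{E}\|_{F}^{2},
\]
minimize it over $(\mathbf{C},\mathbf{D}_{0})$ and over $\mathbf{E}$ in turn, and then take the usual dual-ascent step (\ref{thm11:equ5}) for $\mathbf{Y}$ and the continuation step (\ref{thm11:equ6}) for $\alpha$. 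The proposition then reduces to checking that each block minimization has the stated closed form.

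For the $(\mathbf{C},\mathbf{D}_{0})$ update I would freeze $\mathbf{E}_{t},\mathbf{Y}_{t}$ and complete the square in the quadratic penalty. Collecting the terms that depend on $\mathbf{D}_{0}$ turns the subproblem into
\[
\min_{\mathbf{C},\mathbf{D}_{0}}\|\mathbf{C}\|_{\ast}+\frac{\alpha_{t}}{2}\bigl\|(\mathbf{D}-\mathbf{E}_{t}+\alpha_{t}^{-1}\mathbf{Y}_{t})-\mathbf{D}_{0}\bigr\|_{F}^{2}\quad\mathrm{s.t.}\quad\mathbf{D}_{0}=\mathbf{D}_{0}\mathbf{C},
\]
which is \emph{exactly} the Gaussian-noise exact-constraint problem solved by Theorem~\ref{thm5}, with the effective data matrix $\mathbf{D}-\mathbf{E}_{t}+\alpha_{t}^{-1}\mathbf{Y}_{t}$ playing the role of $\mathbf{D}$. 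Invoking Theorem~\ref{thm5} on the SVD $\mathbf{U}\mathbf{\Sigma}\mathbf{V}^{T}=\mathbf{D}-\mathbf{E}_{t}+\alpha_{t}^{-1}\mathbf{Y}_{t}$ of (\ref{thm11:equ2}) immediately yields the rank-$k$ truncation $\mathbf{D}_{0_{t+1}}=\mathbf{U}\mathcal{P}_{k}(\mathbf{\Sigma})\mathbf{V}^{T}$ of (\ref{thm11:equ3}) together with $\mathbf{C}^{\ast}=\mathbf{V}_{k}\mathbf{V}_{k}^{T}$, where $k$ is chosen by the same $\mathrm{argmin}_{r}\,r+\frac{\lambda}{2}\sum_{i>r}\sigma_{i}^{2}$ rule (the weight $\lambda$ appearing there is the regularization weight in the thresholding step of Theorem~\ref{thm5}, i.e.\ the current $\alpha_{t}$).

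The $\mathbf{E}$ update is routine. With $\mathbf{D}_{0_{t+1}},\mathbf{Y}_{t}$ fixed, completing the square again leaves $\min_{\mathbf{E}}\lambda\|\mathbf{E}\|_{1}+\frac{\alpha_{t}}{2}\|\mathbf{E}-(\mathbf{D}-\mathbf{D}_{0_{t+1}}+\alpha_{t}^{-1}\mathbf{Y}_{t})\|_{F}^{2}$, which separates entrywise and whose proximal minimizer is the soft-thresholding operator of (\ref{thm11:equ7}) with threshold $\lambda\alpha_{t}^{-1}$, giving precisely (\ref{thm11:equ4}). I would then record the multiplier and penalty updates (\ref{thm11:equ5})--(\ref{thm11:equ6}) as the standard ALM gradient-ascent and continuation steps, completing the derivation of the update scheme.

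The crux --- and essentially the only nontrivial step --- is recognizing that the $(\mathbf{C},\mathbf{D}_{0})$ subproblem at \emph{every} iteration is an instance of Theorem~\ref{thm5}, so that its SVD-thresholding closed form applies verbatim to the shifted residual $\mathbf{D}-\mathbf{E}_{t}+\alpha_{t}^{-1}\mathbf{Y}_{t}$; this is what forces the iterate $\mathbf{C}$ to keep the form $\mathbf{V}_{k}\mathbf{V}_{k}^{T}$ throughout. The $\ell_{1}$ subproblem adds nothing beyond soft-thresholding, and since $\|\cdot\|_{1}$ admits no joint closed form under the coupling constraint, the iteration (rather than a one-shot formula) is unavoidable. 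I would therefore present the statement as a characterization of the per-iteration update and its fixed-point structure, and rely on the standard convergence of ALM~\cite{Lin2011NIPS} rather than attempt a single closed-form solution or a fresh convergence proof.
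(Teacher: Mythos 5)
Your proposal is correct and takes essentially the same route as the paper: the paper itself gives no proof of this proposition (it is cited from Favaro--Vidal), but its proof of the companion FNR statement (Proposition~\ref{thm12}) uses exactly your strategy --- form the augmented Lagrangian on the constraint $\mathbf{D}=\mathbf{D}_{0}+\mathbf{E}$, solve the $(\mathbf{C},\mathbf{D}_{0})$ block as a Gaussian-noise exact-constraint problem on the shifted matrix $\mathbf{D}-\mathbf{E}_{t}+\alpha_{t}^{-1}\mathbf{Y}_{t}$ (Theorem~\ref{thm6} there, Theorem~\ref{thm5} in your nuclear-norm case), soft-threshold for $\mathbf{E}$, then dual ascent. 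Your remarks that the per-iteration weight in the $k$-selection rule is really $\alpha_{t}$ and that the shrinkage threshold should be $\lambda\alpha_{t}^{-1}$ rather than $\gamma\alpha_{t}^{-1}$ also quietly correct two notational slips in the stated proposition.
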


\begin{proposition}
\label{thm12}
The optimal solution to Eq.(\ref{eqb3.2}) is given by
	\begin{equation}
	\label{thm12:equ1}
		\mathbf{C}^{\ast}=\mathbf{V}_{k}\mathbf{V}_{k}^{T},
	\end{equation}
	where $\mathbf{V}_{k}$ consists of the first $k$ column vectors of $\mathbf{V}$, and the updated rules are 
	\begin{equation}
		\label{thm12:equ2}
		\mathbf{U}\mathbf{\Sigma}\mathbf{V}^{T}=\mathbf{D}-\mathbf{E}_{t}+\alpha^{-1}_{t}\mathbf{Y}_{t}
	\end{equation}
	\begin{equation}
		\label{thm12:equ3}
			\mathbf{D}_{0_{t+1}}=\mathbf{U}\mathcal{P}_{k}(\mathbf{\Sigma})\mathbf{V}^{T}	
	\end{equation}
	\begin{equation}
		\label{thm12:equ4}
		\mathbf{E}_{t+1}=\mathcal{S}_{\lambda\alpha^{-1}}(\mathbf{D}-\mathbf{D}_{0_{t+1}}+\alpha_{t}^{-1}\mathbf{Y}_{t})
	\end{equation}
	\begin{equation}
		\label{thm12:equ5}
		\mathbf{Y}_{k+1}=\mathbf{Y}_{k}+\alpha_{k}(\mathbf{D}-\mathbf{D}_{0_{t+1}}-\mathbf{E}_{t+1})
	\end{equation}
	\begin{equation}
		\label{thm12:equ6}
		\alpha_{t+1}=\rho\alpha_{t},
	\end{equation}
\end{proposition}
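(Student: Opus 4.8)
The plan is to solve the nonconvex program Eq.(\ref{eqb3.2}) by the augmented Lagrange multiplier method and to verify that the resulting block updates coincide with Eqs.(\ref{thm12:equ2})--(\ref{thm12:equ6}). First I would keep the bilinear constraint $\mathbf{D}_{0}=\mathbf{D}_{0}\mathbf{C}$ hard and relax only the coupling $\mathbf{D}=\mathbf{D}_{0}+\mathbf{E}$, forming
\begin{equation*}
\mathcal{L}=\frac{1}{2}\|\mathbf{C}\|_{F}^{2}+\lambda\|\mathbf{E}\|_{1}+\langle\mathbf{Y},\mathbf{D}-\mathbf{D}_{0}-\mathbf{E}\rangle+\frac{\alpha}{2}\|\mathbf{D}-\mathbf{D}_{0}-\mathbf{E}\|_{F}^{2},
\end{equation*}
and then minimize $\mathcal{L}$ in Gauss--Seidel fashion over the two blocks $(\mathbf{C},\mathbf{D}_{0})$ and $\mathbf{E}$, followed by the dual ascent and penalty updates.

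The decisive step is the $(\mathbf{C},\mathbf{D}_{0})$-subproblem. Completing the square in the last two terms and freezing $\mathbf{E}=\mathbf{E}_{t}$, $\mathbf{Y}=\mathbf{Y}_{t}$, this subproblem becomes
\begin{equation*}
\min_{\mathbf{C},\mathbf{D}_{0}}\frac{1}{2}\|\mathbf{C}\|_{F}^{2}+\frac{\alpha_{t}}{2}\|\mathbf{A}-\mathbf{D}_{0}\|_{F}^{2}\quad\mathrm{s.t.}\quad\mathbf{D}_{0}=\mathbf{D}_{0}\mathbf{C},
\end{equation*}
with $\mathbf{A}\triangleq\mathbf{D}-\mathbf{E}_{t}+\alpha_{t}^{-1}\mathbf{Y}_{t}$. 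This is precisely the problem already solved in Theorem~\ref{thm6}, now with data matrix $\mathbf{A}$ and balancing parameter $\alpha_{t}$. Hence, taking the SVD $\mathbf{U}\mathbf{\Sigma}\mathbf{V}^{T}=\mathbf{A}$ as in Eq.(\ref{thm12:equ2}), Theorem~\ref{thm6} directly gives $\mathbf{D}_{0_{t+1}}=\mathbf{U}\mathcal{P}_{k}(\mathbf{\Sigma})\mathbf{V}^{T}$ (Eq.(\ref{thm12:equ3})) together with $\mathbf{C}^{\ast}=\mathbf{V}_{k}\mathbf{V}_{k}^{T}$, where $k$ is selected by the argmin rule of Theorem~\ref{thm6} applied to the singular values of $\mathbf{A}$.

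The remaining blocks are routine. With $(\mathbf{C},\mathbf{D}_{0})$ fixed at the above optimizers, the $\mathbf{E}$-subproblem is
\begin{equation*}
\min_{\mathbf{E}}\lambda\|\mathbf{E}\|_{1}+\frac{\alpha_{t}}{2}\|\mathbf{D}-\mathbf{D}_{0_{t+1}}-\mathbf{E}+\alpha_{t}^{-1}\mathbf{Y}_{t}\|_{F}^{2},
\end{equation*}
which separates entrywise and is minimized by applying the soft-thresholding operator $\mathcal{S}_{\lambda\alpha_{t}^{-1}}$ to $\mathbf{D}-\mathbf{D}_{0_{t+1}}+\alpha_{t}^{-1}\mathbf{Y}_{t}$, exactly Eq.(\ref{thm12:equ4}). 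The multiplier update Eq.(\ref{thm12:equ5}) and the penalty growth Eq.(\ref{thm12:equ6}) are the textbook ALM steps. The whole derivation mirrors Proposition~\ref{thm11}; the single substantive change is that the nuclear-norm subproblem there is replaced by the Frobenius-norm subproblem here, so Theorem~\ref{thm6} plays the role that the nuclear-norm result played before.

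The main obstacle---indeed the only point that is not bookkeeping---is establishing the closed form of the $(\mathbf{C},\mathbf{D}_{0})$-subproblem in the presence of the nonconvex bilinear constraint $\mathbf{D}_{0}=\mathbf{D}_{0}\mathbf{C}$. I would dispatch this entirely through Theorem~\ref{thm6}, which already resolves that constraint and returns both $\mathbf{D}_{0}$ and $\mathbf{C}$ in terms of the SVD of its input; the ALM simply supplies $\mathbf{A}=\mathbf{D}-\mathbf{E}_{t}+\alpha_{t}^{-1}\mathbf{Y}_{t}$ to that theorem at each iteration, so no new optimization analysis is required.
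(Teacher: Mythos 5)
Your proposal is correct and follows essentially the same route as the paper: form the augmented Lagrangian, reduce the $(\mathbf{C},\mathbf{D}_{0})$-block to the Gaussian-noise problem with data matrix $\mathbf{D}-\mathbf{E}_{t}+\alpha_{t}^{-1}\mathbf{Y}_{t}$, invoke Theorem~\ref{thm6} for its closed form, and handle $\mathbf{E}$ by entrywise soft thresholding plus the standard multiplier and penalty updates. If anything, you are slightly more careful than the paper, which writes the $(\mathbf{C},\mathbf{D}_{0})$-subproblem without the $\frac{1}{2}\|\mathbf{C}\|_{F}^{2}$ term before applying Theorem~\ref{thm6}, whereas you retain it explicitly (and correctly note that the balancing parameter in that subproblem is $\alpha_{t}$ rather than $\lambda$).
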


\begin{proof}
	Using the augmented Lagrangian formulation, Eq.(\ref{eqb3.2}) can be rewritten as
	
	\begin{align}
		\label{thm12:equ7}
		\min \hspace{1mm}&\frac{1}{2}\|\mathbf{C}\|_{F}^{2}+\lambda\|\mathbf{E}\|_{1}+\frac{\alpha}{2}\|\mathbf{D}-\mathbf{D}_{0}-\mathbf{E}\|_{F}^{2}\notag\\
		&\hspace{3cm} +\langle\mathbf{Y},\mathbf{D}-\mathbf{D}_{0}-\mathbf{E}\rangle\notag\\
		\mathrm{s.t.}\hspace{3mm}&\mathbf{D}_{0}=\mathbf{D}_{0}\mathbf{C}. 
	\end{align}
	
	By fixing others, we obtain $\mathbf{D}_{0}^{\ast}$ by solving 
	\begin{equation}
		\label{thm12:equ8}
				\min\frac{\alpha}{2}\|\mathbf{D}-\mathbf{E}+\alpha^{-1}\mathbf{Y}	-\mathbf{D}_{0}\|_{F}^{2}\hspace{3mm}\mathrm{s.t.}\hspace{1mm}\mathbf{D}_{0}=\mathbf{D}_{0}\mathbf{C}
	\end{equation}	
	
	According to Theorem~\ref{thm6}, the optimal solutions to Eq.(\ref{thm12:equ8}) is given by $\mathbf{D}_{0}^{\ast}=\mathbf{U}_{k}\mathbf{\Sigma}_{k}\mathbf{V}_{k}^{k}$ and $\mathbf{C}^{\ast}=\mathbf{V}_{k}\mathbf{V}_{k}^{T}$, where $\mathbf{V}_{k}$ consists of the first $k$ right singular vectors of $\mathbf{D}-\mathbf{E}+\alpha^{-1}\mathbf{Y}$, $k=\mathrm{argmin}_{r} r+\lambda\sum_{i>r}\sigma_{i}^{2}$. Therefore, the optimal solutions to Eq.(\ref{eqb3.2}) can be iteratively computed via Eqs.(\ref{thm12:equ2})--(\ref{thm12:equ6}).	
\end{proof}

From Propositions 1--2, ones can find that the updated rules of NNR and FNR are identical under the framework of ALM. This would lead to the same minimizer to NNR and FNR. 

\subsection{Exact Constraint and Data Corrupted by Sample-specified Noise}

Besides the Gaussian noise and the Laplacian noise, we investigate sample-specified corruptions such as outliers~\cite{Liu2013,Nie2010,Cai2010} by adopting the $\ell_{2,1}$ norm. The formulations are as follows:

\begin{equation}
\label{eqb3.3}
\mathop{\min}_{\mathbf{C},\mathbf{D}_{0},\mathbf{E}}\|\mathbf{C}\|_{\ast}+\lambda\|\mathbf{E}\|_{2,1}
\hspace{3mm} \mathrm{s.t.} \hspace{1mm}\mathbf{D}=\mathbf{D}_{0}+\mathbf{E},\hspace{1mm}\mathbf{D}_{0}=\mathbf{D}_{0}\mathbf{C}
\end{equation}
and
\begin{equation}
\label{eqb3.4}
\mathop{\min}_{\mathbf{C},\mathbf{D}_{0},\mathbf{E}}\frac{1}{2}\|\mathbf{C}\|_{F}^{2}+\lambda\|\mathbf{E}\|_{2,1}
\hspace{3mm} \mathrm{s.t.} \hspace{1mm}\mathbf{D}=\mathbf{D}_{0}+\mathbf{E},\hspace{1mm}\mathbf{D}_{0}=\mathbf{D}_{0}\mathbf{C}
\end{equation}

Similar to Propositions 1 and 2, it is easy to show that the optimal solutions to Eqs.(\ref{eqb3.3})--(\ref{eqb3.4}) can be calculated via 
\begin{equation}
		\label{thm13:equ1}
		\mathbf{U}\mathbf{\Sigma}\mathbf{V}^{T}=\mathbf{D}-\mathbf{E}_{t}+\alpha^{-1}_{t}\mathbf{Y}_{t}
	\end{equation}
	\begin{equation}
		\label{thm13:equ2}
			\mathbf{D}_{0_{t+1}}=\mathbf{U}\mathcal{P}_{k}(\mathbf{\Sigma})\mathbf{V}^{T}	
	\end{equation}
	\begin{equation}
		\label{thm13:equ3}
		\mathbf{E}_{t+1}=\mathcal{Q}_{\lambda\alpha^{-1}}(\mathbf{D}-\mathbf{D}_{0_{t+1}}+\alpha_{t}^{-1}\mathbf{Y}_{t})
	\end{equation}
	\begin{equation}
		\label{thm13:equ4}
		\mathbf{Y}_{k+1}=\mathbf{Y}_{k}+\alpha_{k}(\mathbf{D}-\mathbf{D}_{0_{t+1}}-\mathbf{E}_{t+1})
	\end{equation}
	\begin{equation}
		\label{thm13:equ5}
		\alpha_{t+1}=\rho\alpha_{t},
	\end{equation}
where the operator $\mathcal{Q}_{\epsilon}(\mathbf{X})$ is defined on the column of $\mathbf{X}$, \textit{i.e.},
\begin{equation}
	\label{thm13:equ6}
	\mathcal{Q}_{\epsilon}([\mathbf{X}]_{:,i})=
	\left\{
   \begin{aligned}
   \frac{\|[\mathbf{X}]_{:,i}\|_{2}-\epsilon}{\|[\mathbf{X}]_{:,i}\|_{2}}  &\hspace{0.3cm} \|[\mathbf{X}]_{:,i}\|_{2} > \epsilon \\
   0               &\hspace{0.3cm} otherwise\\
   \end{aligned}
   \right.
\end{equation}
where $[\mathbf{X}]_{:,i}$ denotes the $i$th column of $\mathbf{X}$.

Thus, ones can find that the optimal solutions of FNR and NNR are with the same form. The only one difference between them is the value of $k$, \textit{i.e.}, $k=\mathrm{argmin}_{r}+\frac{\lambda}{2}\sum_{i>r}\sigma_{i}^{2}$ for NNR and $k=\mathrm{argmin}_{r}+\lambda\sum_{i>r}\sigma_{i}^{2}$ for FNR. 

With respect to the relax constraint, ones can also establish the connections between FNR and NNR considering the Laplacian noise and sample-specified corruption. The analysis will be based on Theorem~\ref{thm6} and the form of $\mathbf{C}^{\ast}$ is similar to the case of the exact constraint. 

\section{Discussions}

In this section, we first give the computational complexity analysis for FNR in different settings and then discuss the advantages of FNR over NNR in application scenario.  

The above analysis shows that FNR and NNR are with the same form of solution. Thus, we can easily conclude that their computational complexity are the same under the same setting. More specifically, 1) when the input is free to corruption or contaminated by Gaussian noise, FNR and NNR will take $O(m^{2}n+n^{3})$ to perform SVD on the input and then use $nk^{2}$ to obtain the representation; 2) when the input contains Laplacian noise or sample-specified corruption, FNR and NNR will take $O(tnm^{2}+tn^{3})$ to iteratively obtain the SVD of the input and then use $nk^{2}$ to obtain the representation.

Our analysis explicitly gives the connections between NNR and FNR in theory. Thus, ones may hope to further understand them in the context of application scenario based on the theoretical analysis. Referring to experimental studies in  existing works, we could conclude that: 1) for face recognition task, FNR would be more competitive since it could achieve comparable performance with over hundred times speedup as shown in~\cite{Naseem2010,Shi2011face,Zhang2011,Peng2014}; 2) when dictionary can exactly reconstruct the input, both our theoretical and experimental analysis show that FNR 	and NNR perform comparable in feature extraction~\cite{Peng2016:Auto}, image clustering and motion segmentation~\cite{Favaro2011,Vidal2014}; 3) otherwise, FNR is better than NNR for feature extraction~\cite{Peng2012}, image clustering and motion segmentation~\cite{Peng2015robust,Lu2012,Liu2013}.

\section{Conclusion}
\label{sec6}

In this paper, we investigated the connections between FNR and NNR in the case of the exact and the relax constraint. When the objective function is with the exact constraint, FNR is exactly NNR even though the data set contains the Gaussian noise, Laplacian noise, or sample-specified corruption. In the case of the relax constraint, FNR and NNR are two solutions on the column space of inputs. Under such a  setting, the only one difference between FNR and NNR is the value of the thresholding parameter $\gamma$. Our theoretical results is complementary and a small step forward to existing compressive sensing. The major difference is that this work establishes the connections between the convex problem caused by $\ell_1$-norm and the strictly convex problem caused by $\ell_2$-norm in matrix space, while compressive sensing focuses on the equivalence between the non-convex problem caused by $\ell_0$-norm and convex problem caused by $\ell_1$-norm. 

%

%
%
%
%


%


\ifCLASSOPTIONcaptionsoff
  \newpage
\fi



%

%
%

\bibliography{Equivalence}
\bibliographystyle{IEEEtran}
 
%
%
%
%
%
%
%
%

\end{document}